\Crefname{enumi}{requirement}{}
\Crefname{enumi}{Requirement}{}
\theoremstyle{plain}
\newtheorem{theorem}{Theorem}[section]
\newtheorem{proposition}[theorem]{Proposition}
\newtheorem{lemma}[theorem]{Lemma}
\theoremstyle{definition}
\newtheorem{definition}[theorem]{Definition}
\theoremstyle{remark}
\newtcolorbox{promptbox}[1][]{
    enhanced, 
    breakable, 
    colback=gray!5, 
    colframe=gray!60, 
    arc=2mm, 
    fonttitle=\bfseries,
    title=Prompt, 
    attach boxed title to top left={xshift=3mm, yshift=-\tcboxedtitleheight/2},
    boxed title style={
        colback=gray!60,
        colframe=gray!60,
        arc=2mm,
        bottomrule=0pt,
        toprule=0pt,
        leftrule=0pt,
        rightrule=0pt,
    },
    #1
}
\newtcolorbox{simplepromptbox}[1][]{
    colback=black!5!white,
    colframe=black!75!black,
    fonttitle=\bfseries,
    title=Prompt:, 
    sharp corners,
    #1
}
\icmltitlerunning{Efficient Numeracy in Language Models through Single-token Number Encodings}
\begin{document}

\twocolumn[
  \icmltitle{Efficient Numeracy in Language Models through Single-token Number Encodings}



  \icmlsetsymbol{equal}{*}
  \icmlsetsymbol{work}{$\dagger$}

  \begin{icmlauthorlist}
    \icmlauthor{Linus Kreitner}{equal,aim}
    \icmlauthor{Paul Hager}{equal,aim,qc,work}
    \icmlauthor{Jonathan Mengedoht}{tum}
    \icmlauthor{Georgios Kaissis}{aim,hpi}
    \icmlauthor{Daniel Rueckert}{aim,icl,mcml}
    \icmlauthor{Martin J. Menten}{aim,icl,mcml}
  \end{icmlauthorlist}

  \icmlaffiliation{aim}{Chair for AI in Healthcare and Medicine, Technical University of Munich (TUM) and TUM University Hospital, Munich, Germany}
  \icmlaffiliation{tum}{TUM, Munich, Germany}
  \icmlaffiliation{icl}{Department of Computing, Imperial College London, UK}
  \icmlaffiliation{mcml}{Munich Center for Machine Learning (MCML), Munich, Germany}
  \icmlaffiliation{qc}{Virtual Diagnostics Unit, QuantCo, Cambridge, MA, USA}
   \icmlaffiliation{hpi}{Hasso Plattner Institute for Digital Engineering, University of Potsdam, Germany}

  \icmlcorrespondingauthor{Linus Kreitner}{linus.kreitner@tum.de}
  \icmlcorrespondingauthor{Paul Hager}{paul.hager@tum.de}

  \icmlkeywords{Machine Learning, ICML, LLM, numeracy, math, tokenization}

  \vskip 0.3in
]



\printAffiliationsAndNotice{
\icmlEqualContribution
\textsuperscript{${}^\dagger$} Work done while at TUM.
} 

\begin{abstract}
To drive progress in science and engineering, large language models (LLMs) must be able to process large amounts of numerical data and solve long calculations efficiently.
This is currently only possible through the use of external tools or extensive reasoning chains, either weakening the numerical representations of LLMs or limiting the length of problems they can solve.
We show that frontier LLMs require excessive amounts of reasoning tokens to solve even basic calculations, which is exacerbated by their tokenization strategies that split single numbers into multiple tokens.
This motivates the need for efficient and effective single-token number encodings.
We introduce a set of desiderata for such encodings and show that existing approaches fail to fulfill them.
To address these shortcomings, we propose \textit{BitTokens}\footnote{Code available at \href{https://github.com/KreitnerL/BitTokens}{https://github.com/KreitnerL/BitTokens}}
, a novel encoding strategy that represents any number as a single token using its IEEE 754 binary floating-point representation.
Through extensive experiments we show that our BitTokens allow even small language models to learn algorithms that solve basic arithmetic operations nearly perfectly.
This newly gained efficiency could expand the length and complexity of problems language models can solve.
\end{abstract}

\begin{figure}[ht]
    \centering
    \includegraphics[width=1\linewidth]{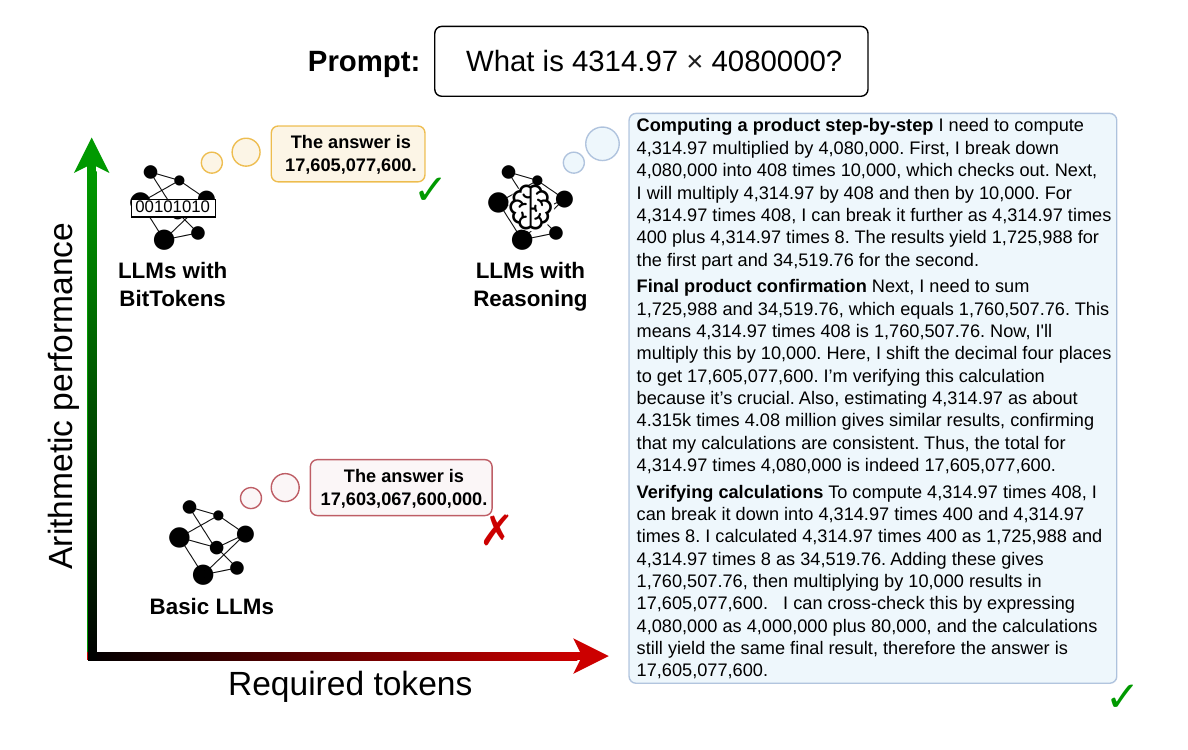}
    \caption{LLMs perform poorly on arithmetic tasks, requiring excessive reasoning tokens to achieve good performance. Our BitTokens tokenization strategy allows language models to solve arithmetic tasks both effectively and efficiently.}
    \label{fig:overview}
\end{figure}
\section{Introduction}

Many researchers share the common vision that large language models (LLMs) will not only alleviate routine work but also drive scientific and technological innovation.
In many fields, such as physics and engineering, solving such complex tasks requires the processing of large amounts of numerical data and extensive calculations.
Thus, to aid advancements in these fields, LLMs must possess efficient and effective numeracy, defined as the ability to represent and compute numbers.
However, LLMs have historically struggled to solve even basic calculations \citep{ai4science2023impactlargelanguagemodels,cuicurie,Hager2024}.
A growing body of work has thus tasked itself with finding the capabilities and limitations of transformers to perform mathematical operations \citep{ICLR2024_dd0032a5,lee2024teaching,mcleish2024transformers,nogueira2021investigatinglimitationstransformerssimple,shen2023positional}.

Research to improve the numeracy of LLMs has predominantly focused on two strategies: arithmetic tool use and reasoning chains.
Tool-augmented LLMs leverage external calculators or code to bypass the need for internal arithmetic computation 
\citep{gao2023pal, gou2024tora, gu-etal-2024-middleware, he-yueya2023solving, le2022coderl, parisi2022talm, Qu2025, schick2023toolformer, wang2024chainoftableevolvingtablesreasoning, zhang2023evaluating}.
While this approach guarantees the correctness of the calculations, it forces the model to dynamically and correctly identify, construct, and wait on all mathematical operations, introducing non-negligible latency and sources of error.
Furthermore, outsourcing all calculations prevents the model from generating and calculating with intermediate numeric representations during a forward pass, restricting the efficiency of latent calculations\citep{hao2025training,lindsey2025biology,skean2024does}.

Reasoning chains, on the other hand, prompt LLMs to generate logically consistent text, step-by-step, enabling them to solve complex problems by breaking them into smaller parts \citep{lightman2023let,snell2024scaling,wangself,NEURIPS2022_9d560961,yao2023tree}.
This has resulted in large gains on many benchmarks, pushing the frontier of what is possible with LLMs.
However, reasoning chains can be very inefficient, sometimes requiring tens of thousands of tokens to solve a single calculation, as illustrated in \Cref{fig:overview} and shown later.
This limits the length and complexity of problems that LLMs can solve due to context window and cost constraints \citep{kwa2025measuring}.

We argue that both tooling and reasoning chains are merely crutches, allowing LLMs to solve arithmetic calculations but preventing them from obtaining the intrinsic numeric computation skills required to efficiently solve complex tasks in advanced technical domains.
We hypothesize that addressing this problem requires rethinking the way LLMs tokenize and encode numbers.
Existing tokenizers treat numbers the same as any other word, meaning they are parsed left to right and split based on a pre-defined vocabulary.
This introduces an additional source of token inefficiency, especially in fields such as tabular language models that must ingest tens or hundreds of thousands of numbers \citep{akhtar2023exploring,hegselmann2023tabllm,fang2024large,gardner2024large,liu2024rethinking,zhang2025tablellm}, and is not well suited for comparing and calculating numbers.

A series of works has tried to enhance this encoding step by adding information about the magnitude of numbers via additional tokens \citep{schwartz2024numerologicnumberencodingenhanced} or positional embeddings \citep{mcleish2024transformers}.
Others have aimed to address issues of left-to-right token prediction clashing with right-to-left carry-over logic by changing the order or formatting of digits during encoding \citep{baeumel2025lookaheadlimitationmultioperandaddition,lee2024teaching,singh2024tokenizationcountsimpacttokenization}.
While all these approaches improve upon standard encoding strategies, they still suffer from the efficiency concerns of single and triple digit tokenization strategies and fail to achieve perfect arithmetic performance.

The most ambitious line of works aims to completely change the way numbers are represented in LLMs, moving from representing individual digits or triplets of digits to encoding each number via a single specialized number token.
\citet{han2023lunalanguageunderstandingnumber} and \citet{alberts2024interleaving} both employ small neural networks to generate such a number token based on the number's digits and its surrounding textual context.
While this improves numeric computational ability, it suffers from similar efficiency drawbacks as conventional tokenization, requires auxiliary networks, and has only been tested for encoding.
Ideally one would have a deterministic algorithm to encode numbers into and decode numbers from a fixed structure which would allow LLMs to learn calculation algorithms internally.
Accordingly, \citet{golkar2024xvalcontinuousnumericaltokenization} introduce xVal, which scales a learned number token by the numeric value.
\citet{zhou2026fone} propose FoNE, which encodes a number using sinusoidal functions of different frequencies.
While conceptually interesting, we will demonstrate that both methods have fundamental shortcomings that render them ineffective number representations.

With the aim of evaluating and improving both the efficiency and numeracy of LLMs, this work makes the following three major contributions:
\begin{itemize}
\item We systematically evaluate the numeric computational ability of eight frontier LLMs across nine different tasks. 
We find that frontier LLMs strongly rely on reasoning chains with a large number of tokens to solve calculations (see \Cref{sec:frontier}).

\item We develop a set of desiderata for efficient single-token number encodings that allows networks to learn algorithms that near perfectly execute numeric calculations (see \Cref{sec:single_token_strategies}).
In an analysis of recent works, we show that existing single-token strategies fail to fulfill several key desiderata, limiting their potential to be used in LLMs (see \Cref{sec:limitations}).

\item Guided by our desiderata, we propose BitTokens, a novel single-token encoding.
Based on the IEEE 754 standard on binary floating-point numbers, BitTokens encode numbers as a sequence of bits representing the sign, exponent, and significand (see \Cref{sec:bittokens}).
We demonstrate our method's ability to generate structured, informative representations that allow even small language models to learn algorithms to solve basic arithmetic operations (see \Cref{sec:experiments}).
\end{itemize}

\section{Evaluation of numeracy in frontier LLMs}
\label{sec:frontier}

\begin{figure*}[ht]
    \centering
    \includegraphics[width=0.875\linewidth]{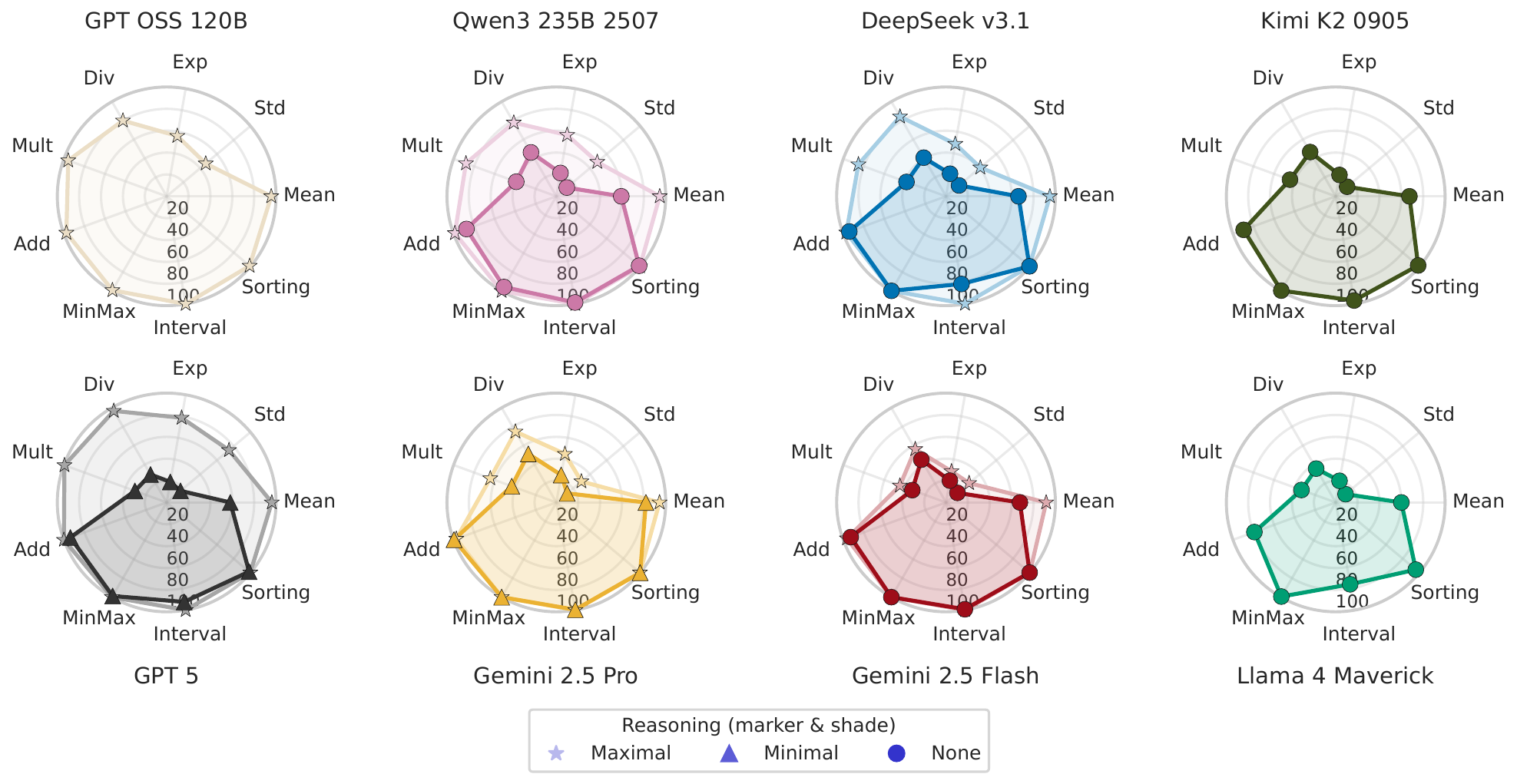}
    \caption{While simple tasks such as addition and sorting are almost perfectly solved by frontier LLMs, other tasks such as multiplication, division, calculating the standard deviation, or exponentiation remain difficult and require extensive reasoning tokens to solve. }
    \label{fig:frontier_radar}
\end{figure*}

\textbf{Numeracy benchmarks for language models}
Mathematics is a core discipline on which LLMs are evaluated \citep{ahn2024large,10.5555/3692070.3692401, shao2024deepseekmathpushinglimitsmathematical,zhang2024interpreting}.
However, popular benchmarking datasets like GSM8K, MMLU-Pro, and Numeracy-600K are designed to test mathematical \textit{reasoning} \citep{chen2019numeracy,cobbe2021trainingverifierssolvemath, mirzadeh2025gsmsymbolic,wang2024mmlupro}. 
They primarily sample integers from narrow distributions and rely on text-based problems that conflate arithmetic with language understanding.
Recently, a few benchmarks have attempted to isolate numeric computations. 
For example, NumericBench includes an arithmetic operations dataset, although it only tests integers with up to 6 digits and floats with up to 3 decimal places \citep{li2025exposingnumeracygapsbenchmark}. 
In contrast, The NumberCookbook tests a wide range of tasks and formats, including number comparisons and arithmetic on integers, floats, fractions, and scientific notation \citep{yang2025number}. 
We create our own number dataset to add key operations such as division between floats, exponentiation, mean, and standard deviation. Additionally, we require full control over the operands' precision and the difficulty distribution of the arithmetic problems for proper training dynamics.

\textbf{Tasks}
We design our tasks to isolate individual aspects of numeracy as well as core mathematical operations. 
The most basic task is to determine whether a numerical comparison of the form \( n_1 \mathrel{\{<,>\}} n_2 \) holds true.
To increase the difficulty of number comparisons, we test the ability to: (1) return the minimum or maximum of a list, (2) select the correct interval containing a number from a list of options, and (3) sort a list of numbers in ascending or descending order.
To test the calculating capabilities of LLMs, we use all basic arithmetic operations including (4) addition / subtraction, (5) multiplication, and (6) division.
Finally, we test the multi-step operations (7) exponentiation, (8) mean, and (9) standard deviation.
See \Cref{app:tasks} for examples.

\textbf{Number sampling}
We utilize a custom number sampling strategy to ensure that we test a large diversity of inputs.
Specifically, all generated numbers \( n \in \mathbb{R} \) satisfy \( n \in (-10^{15}, -10^{-14}] \cup [10^{-14}, 10^{15}) \) and contain at most \(15\) significant decimal digits, defined as the total number of digits excluding leading and trailing zeros.
Number magnitudes, differences between numbers, and precision are all densely and uniformly sampled. 
Further information can be found in \Cref{app:benchmark_distribution}.

\begin{figure*}
    \centering
    \includegraphics[width=0.8\linewidth]{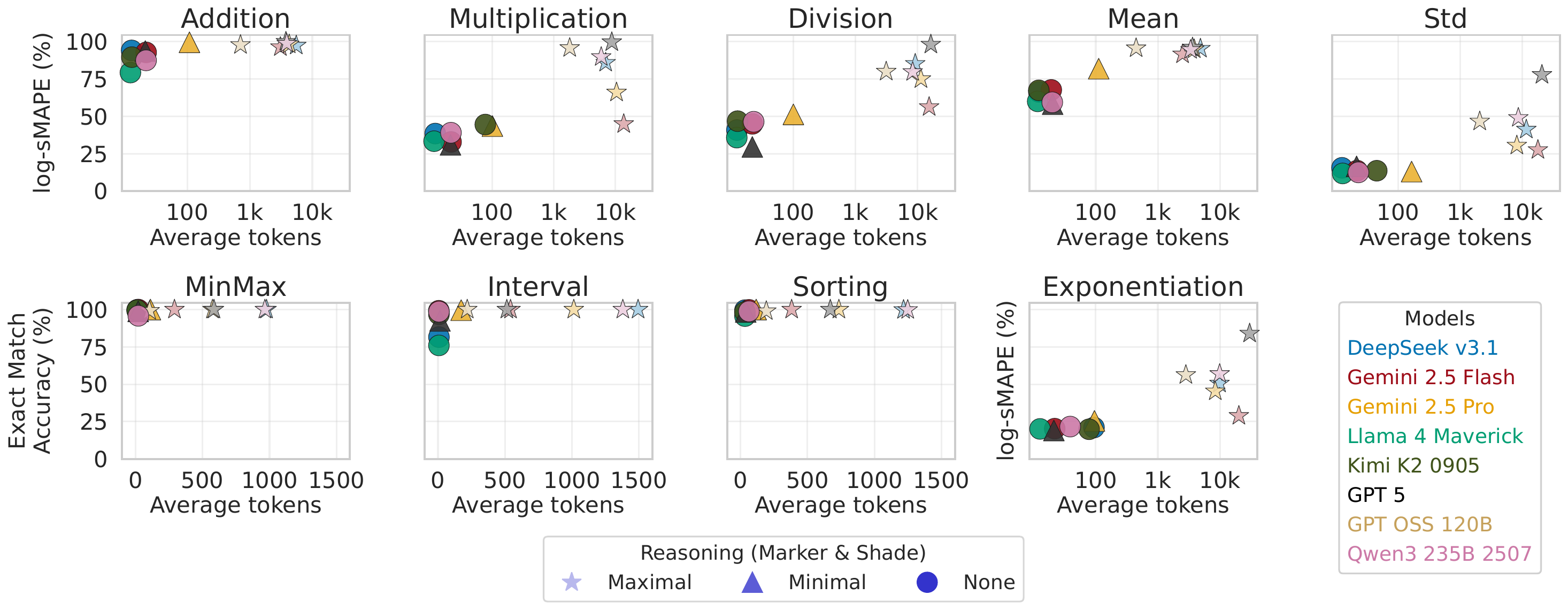}
    \caption{Difficult numeracy tasks such as multiplication, division, exponentiation, and standard deviation can only be solved by frontier models using excessive reasoning tokens.}
    \label{fig:frontier_tokens}
\end{figure*}

\textbf{Evaluation metrics}
While exactly matching the true result is the correct answer, it is useful to know the relative error to compare models.
To measure the relative error between a prediction $\hat{y}$ and the target $y$, the symmetric mean absolute percentage error ($\mathrm{sMAPE}$) \citep{flores1986pragmatic} is defined as:
\begin{equation}
\mathrm{sMAPE}(\hat y, y) = \frac{\lvert \hat y - y\rvert}
         {\lvert y\rvert + \lvert \hat y\rvert + \varepsilon},\quad \text{with }\epsilon=10^{-100}
\end{equation}
While this measures the relative error and is thus well suited for numbers and errors of different magnitudes, the metric decreases exponentially with increasing accuracy in significant figures.
An error in the 4th significant digit can thus already achieve a $\mathrm{sMAPE}$ on the order of $10^{-4}$ despite the error being potentially very relevant in many real-world contexts.
To address this, we propose the $\mathrm{log\textnormal{-}sMAPE}$ score, which uses the base-10 logarithm of $\mathrm{sMAPE}$ divided by the total number of significant digits: 
\begin{equation}
\mathrm{log\textnormal{-}sMAPE}(\hat y, y) = \min\left(1,\;
       \frac{\lg\left(\mathrm{sMAPE}(\hat y, y)+\varepsilon\right)}{-M}
     \right)
\end{equation} %

Here $M=15$ is the maximum number of significant digits tested.
In essence, $\mathrm{log\textnormal{-}sMAPE}$ calculates the fraction of significant figures correctly predicted until the first error.
Thus, a $\mathrm{log\textnormal{-}sMAPE}$ value of 0.2 indicates that the first 3 out of 15 significant digits are correct.
We use this metric in addition to exact match accuracy for the addition, multiplication, division, exponentiation, mean, and standard deviation tasks.

\textbf{Results}
A radar plot of the results is shown in \Cref{fig:frontier_radar} and an overview of the tested models and exact results can be found in \Cref{app:frontier_benchmark_results}.
It is important to note that we have no way of controlling what the proprietary models (such as GPT 5 and Gemini 2.5) do during the generation of their answer.
While we very explicitly instruct them not to use calculators, see that their reasoning chains align with non-calculator use, and they often do not answer correctly, we have no guarantee that they do not use external tools. 
Their reported results should be seen as an upper bound.

Immediately apparent is the perfect accuracy of almost all models on the basic number comparison tasks of MinMax, Sorting, and Interval, which are effectively solved. 
When considering the basic arithmetic tasks, there is a clear separation between high reasoning and non- or minimal-reasoning models.
While both perform excellently on addition and mean calculation, all non-reasoning models perform very poorly on multiplication, division, exponentiation, and standard deviation tasks, with none achieving above 60\% $\mathrm{log\textnormal{-}sMAPE}$.
Only once we allow a large number of reasoning tokens do we see improved performance on these tasks.
This separation between thinking and non-thinking models is underscored in \Cref{fig:frontier_tokens}.
LLMs use between 5 and 30 thousand tokens to solve a single calculation.
While future advances in post-training could reduce the number of reasoning tokens \cite{singhal2024a,yeo2025demystifying}, it is clear that current LLMs rely heavily on reasoning chains to compensate for their poor native arithmetic performance.
Such high token counts for just a single calculation make multi-step calculations and longer reasoning chains infeasible, thus motivating the need for a single-token number representation.

\section{Desiderata for single-token number representation strategies}
\label{sec:single_token_strategies}

Our goal is to develop a number encoding strategy that generates representations which both maximize token efficiency and allow for neural networks to learn algorithms that perfectly execute numeric calculations. 
Such an algorithm or function must be injective and correct over the entire defined numeric input space.
It also requires us to consider engineering realities, such as the normalization layers used, the numerical precision of the activation functions, and how gradient-based stochastic optimization learns representations:
\begin{enumerate}[
  label=\textbf{D\arabic*\,\textendash\,}, 
  labelsep=0pt, 
  ref=D\arabic*, 
  leftmargin=0pt,
  itemindent=0pt,
  labelindent=0pt,
  labelwidth=0pt,
  align=left,
  topsep=0pt]
    \item \textbf{Token efficiency:} Every number is represented by a single token.\label{D1}
    \item \textbf{Uniqueness:} Each value has exactly one valid encoding, with a unique inverse mapping.\label{D2}
    \item \textbf{Structured:} The encoding geometry reflects numeric order and distance, facilitating generalizable algorithms.\label{D3}
    \item \textbf{Scale invariance:} The desired range of input magnitudes and precisions can be represented.\label{D4}
    \item \textbf{Normalization:} Encodings are bounded and information preserving under standard normalization functions used in language models (e.g., LayerNorm, RMSNorm).\label{D5}
    \item \textbf{Numerical stability:} Representations remain accurate when using low-precision activations (e.g., FP8).\label{D6}
    \item \textbf{Continuity:} Encodings vary smoothly with the underlying value, making them compatible with gradient-based optimization.\label{D7}
    \item \textbf{Robustness:} Values can be decoded reliably under stochastic noise, allowing for stochastic training.\label{D8}
    \item \textbf{Arithmetic:} Encodings admit learnable algorithms for core mathematical operations.\label{D9}
\end{enumerate}

\section{Establishing the limitations of existing single-token strategies using our desiderata}
\label{sec:limitations}

In this section we will analyze the two existing approaches -- xVal \citep{golkar2024xvalcontinuousnumericaltokenization} and FoNE \citep{zhou2026fone} -- for constructing single-token encodings and demonstrate why they are unsuitable using our desiderata.
xVal introduces a trainable number token, $[\mathtt{NUM}]$, that is scaled by the numerical value being encoded.
If a $[\mathtt{NUM}]$ token is predicted, the number can be  decoded from the final hidden state through a separate number head.
This approach satisfies \Cref{D1,D3,D2,D7}. 
However, as shown by the authors, this strategy must map all inputs to the range [-5, 5] to satisfy \ref{D5} and avoid layer normalization interfering with the information encoded in the encoding magnitude.
This causes it to be extremely limited in the range and precision of numbers it can encode, thus violating \ref{D4}, \ref{D6}, and by extension \ref{D9}.

FoNE encodes a number in a single $[\mathtt{NUM}]$ token using Fourier features.
Each dimension of the representation corresponds to either a sine or cosine function of base 10 with differing frequencies, each of which encodes the number before being added to a trainable $[\mathtt{NUM}]$ token.
When a language model predicts a number token, the final hidden state can directly be interpreted as a sinusoidal encoding.
The output number is predicted digit by digit through the maximum cosine similarity between each dimension of the final hidden state (corresponding to each digit of the output) with those of the encodings of the numbers in the range $[0, 9]$.
This fulfills \Cref{D1,D4,D3,D2,D5,D6,D7,D8}.
In contrast to simply scaling a $[\mathtt{NUM}]$ token, using both sine and cosine functions equally guarantees a constant RMS norm, satisfying \ref{D5}. 
Such a digit-wise maximum similarity decoding also allows for robustness to noise, satisfying \ref{D8}.

The key limitation of sinusoidal encodings is that they are poorly suited for solving certain arithmetic operations with neural networks, most notably multiplication.
In order to satisfy condition \ref{D9}, we require a learnable mapping $o'$ that computes the encoding, $\xi$, of the result of an operation $o$ from the encodings of its operands:
\begin{equation}
    \forall_{o}\ \exists_{o'}:\ \xi\big( o(x_1, \ldots, x_n) \big) \;=\; o'\big( \xi(x_1), \ldots, \xi(x_n) \big)
\end{equation}
For addition, such a mapping exists and is computationally trivial.
\begin{definition}[\textbf{sinusoidal encoding}]
A sinusoidal encoding $\mathcal F:\mathbb R\mapsto \mathbb T^{|\Phi|}$ maps real numbers to a $|\Phi|$-dimensional torus, which forms a compact abelian Lie group. Given frequencies $b^\phi$ with base $b>1$ and $\phi\in\Phi\subset\mathbb Z$, then $\mathcal F$ is defined as:
\begin{equation}
    \mathcal F(x):=\left[\cos(2\pi b^\phi x),\, \sin(2\pi b^\phi x)\right]_{\phi \in \Phi}
= \left[e^{i2\pi b^\phi x}\right]_{\phi\in\Phi}
\end{equation}

\end{definition}
\begin{lemma}[\textbf{additive homomorphism}]\label{lem:fourier_sum}
The encoding map $\mathcal F$  is a group homomorphism from the additive group of real numbers $(\mathbb R, +)$ to the multiplicative torus $(\mathbb T^{|\Phi|}, \odot)$, where $\odot$ denotes the element-wise (Hadamard) product.
\end{lemma}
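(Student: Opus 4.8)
The plan is to verify directly that $\mathcal F$ intertwines addition in $\mathbb R$ with the Hadamard product on the torus, using only the functional equation of the complex exponential. First I would fix the complex-exponential representation $\mathcal F(x) = \bigl[e^{i2\pi b^\phi x}\bigr]_{\phi\in\Phi}$ and note that each coordinate lies on the unit circle $\mathbb T = \{z\in\mathbb C : |z|=1\}$, so $\mathcal F$ genuinely maps into $\mathbb T^{|\Phi|}$; this also records that $(\mathbb T^{|\Phi|},\odot)$ is a group with identity $(1,\dots,1)$ and coordinatewise inverse $z\mapsto\bar z$.

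The core computation is a single line: for any $x,y\in\mathbb R$ and any $\phi\in\Phi$,
\begin{equation}
e^{i2\pi b^\phi (x+y)} = e^{i2\pi b^\phi x}\, e^{i2\pi b^\phi y},
\end{equation}
which is exactly the exponential law $e^{a+b}=e^a e^b$ with $a = i2\pi b^\phi x$ and $b = i2\pi b^\phi y$; here it is essential that the frequencies $b^\phi$ are fixed constants, so the exponent is linear in $x$. Collecting these identities over all $\phi\in\Phi$ and recognizing the right-hand side as the element-wise product of the tuples $\mathcal F(x)$ and $\mathcal F(y)$ gives $\mathcal F(x+y) = \mathcal F(x)\odot\mathcal F(y)$, i.e. $\mathcal F$ is a homomorphism. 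Preservation of the identity, $\mathcal F(0) = (1,\dots,1)$, and of inverses, $\mathcal F(-x) = \overline{\mathcal F(x)}$, then follow automatically, though both can also be checked directly from $e^{0}=1$.

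If one prefers the real $[\cos,\sin]$ coordinates, the same statement is the pair of angle-addition formulas $\cos(\theta+\psi) = \cos\theta\cos\psi - \sin\theta\sin\psi$ and $\sin(\theta+\psi) = \sin\theta\cos\psi + \cos\theta\sin\psi$ with $\theta = 2\pi b^\phi x$ and $\psi = 2\pi b^\phi y$; packaging each $(\cos,\sin)$ pair as a unit complex number turns this product of planar rotations into the Hadamard product above, so the two formulations coincide. I would also remark that $\mathcal F$ is smooth, each coordinate being a composition of $x\mapsto b^\phi x$ with sine or cosine, which is what is later invoked for \ref{D7}.

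There is essentially no substantive obstacle here; the only care needed is bookkeeping — ensuring the codomain's group law is read as the componentwise product of unit-modulus complex numbers rather than as additive angle coordinates, and treating the $b^\phi$ as constants so linearity in $x$ is genuine. The lemma is really just the observation that a finite product of one-parameter rotation subgroups is again a homomorphic image of $(\mathbb R,+)$, and it is precisely this structural fact that makes addition trivially learnable on sinusoidal encodings, in contrast to multiplication, which is what the surrounding discussion exploits.
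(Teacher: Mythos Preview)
Your proof is correct and follows essentially the same approach as the paper: both arguments reduce to the single exponential identity $e^{i2\pi b^\phi(x+y)} = e^{i2\pi b^\phi x}\,e^{i2\pi b^\phi y}$ applied componentwise over $\phi\in\Phi$. Your additional remarks on the codomain, identity, inverses, the real $[\cos,\sin]$ formulation, and smoothness are all accurate and go beyond what the paper records, but the core computation is identical.
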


\begin{proof}
The claim follows directly from Euler’s formula:
\begin{equation}
    \begin{aligned}
        \mathcal F(x_1+x_2) 
        &= \left[ e^{i2\pi b^\phi (x_1+x_2)} \right]_{\phi\in\Phi} \\
        &= \left[ e^{i2\pi b^\phi x_1} e^{i2\pi b^\phi x_2} \right]_{\phi\in\Phi} 
        = \mathcal F(x_1) \odot \mathcal F(x_2)\
    \end{aligned}
\end{equation}
Thus the encoding of the sum of two numbers $x_1$ and $x_2$ can be computed by a simple component-wise multiplication of their respective encodings.
\end{proof}

This homomorphism elegantly transforms addition in the number domain into a simple, local operation in the encoded domain. Notably, this operation does not require carry-over logic. However, for multiplication such a simple mapping does not exist.

\begin{proposition}[\textbf{complexity of multiplication}]
\label{prop:fourier_mult}
Let $\mathcal X:=\{\varepsilon,\dots,U\}$ be the set of input numbers with resolution $\varepsilon=b^{m}$ and choose $\Phi = \{m,\dots,n\}$ so that $\mathcal F$ uniquely encodes the entire number range. Assume each encoding component has a finite precision $P$ (i.e., can represent $P$ distinct states). Suppose there exists an operator $\otimes_\phi:\mathbb T^{|\Phi|}\times\mathbb T^{|\Phi|}\mapsto\mathbb T$ such that $\otimes_\phi(\mathcal F(x),\mathcal F(y)) := \mathcal F_\phi(xy)$ for each output frequency $\phi\in\Phi$. Let $S_\phi^x,S_\phi^y\subseteq\Phi$ be the subsets of input frequencies that $\otimes_\phi$ is required to take as input from $\mathcal F(x)$ and $\mathcal F(y)$, respectively. Then:
\begin{enumerate}
    \item \textbf{Non-locality} The operator $\otimes_\phi$ must access at least $d = \mathcal O\left(\log_P(U/\varepsilon)\right)$ components from each input vector with $|S_\phi^x|,\;|S_\phi^y|\;\ge\;\left\lceil \log_P |\mathcal X| \right\rceil$.
    \item \textbf{Computational complexity} The operator $\otimes$ must perform a computation functionally equivalent to polynomial multiplication.
\end{enumerate}
\end{proposition}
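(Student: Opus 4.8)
The plan is to prove both parts by reformulating the Fourier encoding in a base-$b$ digit picture, then establishing the non-locality bound by an indistinguishability argument and the complexity claim by a cheap change-of-representation reduction to and from polynomial multiplication. Writing an input as $x=\sum_k x_k b^k$ with digits $x_k\in\{0,\dots,b-1\}$, the phase of the $\phi$-th component is $b^\phi x\bmod 1=\sum_{k<-\phi}x_k\,b^{k+\phi}$, which read in base $b$ is the fractional string $0.x_{-\phi-1}x_{-\phi-2}\cdots$; since each component is only resolved to precision $1/P$, component $\phi$ reveals a window $W_\phi$ of $\Theta(\log_b P)$ consecutive digit positions of $x$, up to $O(1)$-digit fuzziness at the edges from rounding and carries. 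Because $\Phi$ is chosen so that $\mathcal F$ is injective on $\mathcal X$, the windows $\{W_\phi\}_{\phi\in\Phi}$ must jointly cover all $\Theta(\log_b(U/\varepsilon))=\Theta(\log_b|\mathcal X|)$ digit positions occurring in $\mathcal X$, which already forces $|\Phi|=\Theta(\log_P|\mathcal X|)$ and is the origin of the quantity $d$.

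For the non-locality claim I would argue by contradiction. Fix an output frequency $\phi$ and suppose $|S_\phi^x|<\lceil\log_P|\mathcal X|\rceil$; then the windows $\{W_{\phi'}\}_{\phi'\in S_\phi^x}$ cover strictly fewer than $\log_b|\mathcal X|$ digit positions, whereas the digits of $x$ that influence the output window of $\phi$ --- those $x_k$ feeding the convolution entries $c_j=\sum_k x_k y_{j-k}$ for $j$ in that window --- span $\Theta(\log_b|\mathcal X|)$ positions, so some such position $k^\star$ is seen by no component in $S_\phi^x$. Pick a generic $x\in\mathcal X$ with $x_{k^\star}=0$ whose relevant component phases all lie more than $1/P$ from a quantization threshold, and set $x'=x+b^{k^\star}\in\mathcal X$; because $k^\star\notin W_{\phi'}$ for every $\phi'\in S_\phi^x$, flipping that digit changes each of those components' phases either by an integer or by strictly less than $1/P$, so $\mathcal F(x)$ and $\mathcal F(x')$ carry identical quantized data on all of $S_\phi^x$. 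Choosing the second operand $y=b^{\ell}$ with $\ell$ chosen to route $k^\star+\ell$ to the top of $\phi$'s output window makes $x'y-xy=b^{k^\star+\ell}$ shift the phase $b^\phi xy\bmod 1$ by $\Theta(1/b)\ge 1/P$, so $\mathcal F_\phi(xy)\neq\mathcal F_\phi(x'y)$; since $\otimes_\phi$ reads identical data from $\mathcal F(x)$ and from the unchanged $\mathcal F(y)$ in both cases, this contradicts $\otimes_\phi$ being a function. Hence $|S_\phi^x|\ge\lceil\log_P|\mathcal X|\rceil$, and the symmetric construction gives $|S_\phi^y|\ge\lceil\log_P|\mathcal X|\rceil$; with $|\mathcal X|=\Theta(U/\varepsilon)$ this is the stated $d=\mathcal O(\log_P(U/\varepsilon))$.

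For the complexity claim, note that passing between $\mathcal F(z)$ and its vector of base-$P$ super-digits --- one per component, each being a block of $\Theta(\log_b P)$ ordinary base-$b$ digits of $z$ --- is cheap in both directions: decoding reads and rounds each component's phase, encoding evaluates the exponentials. Conjugating $\otimes$ by these maps turns it into the fixed-point multiplication map on length-$d$ super-digit vectors, which factors as (i) the acyclic convolution of the two super-digit vectors --- precisely the product of the degree-$d$ polynomials they define --- followed by (ii) carry propagation assembling the super-digits of $xy$, an $\mathcal O(d)$ prefix-type computation. Since step (ii) and the encode/decode conjugation all have linear size and constant depth, the only non-local content of $\otimes$ is the convolution in (i); conversely, evaluating two degree-$d$ polynomials with bounded coefficients at a large enough power of $P$ (Kronecker substitution), applying $\otimes$, and reading the coefficients back off reduces polynomial multiplication to one call of $\otimes$ with the same cheap pre- and post-processing. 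Thus $\otimes$ is inter-reducible with multiplying two degree-$d=\Theta(\log_P|\mathcal X|)$ polynomials up to linear-size, constant-depth overhead, so it must perform a computation functionally equivalent to polynomial multiplication --- in sharp contrast to addition, which by \Cref{lem:fourier_sum} is realized by the $\mathcal O(1)$-local Hadamard product.

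The main obstacle is making the digit-window picture fully rigorous under quantization: ``component $\phi$ reveals $\Theta(\log_b P)$ digits'' holds only up to $O(1)$-digit guard regions --- a low-order carry can nudge a quantized phase across a threshold, and $\log_b P$ need not be an integer --- so the position count and the genericity of the witness $x$ must be tracked carefully to land on the exact ceiling $\lceil\log_P|\mathcal X|\rceil$ rather than merely $\Omega(\log_P|\mathcal X|)$. A secondary point is verifying that perturbing a single digit of $x$ genuinely perturbs the targeted digit of $xy$ rather than being absorbed by an incoming carry, which is handled by choosing $x$ (and, if needed, a second nonzero digit of $y$) generically.
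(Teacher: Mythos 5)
Your argument reaches the same conclusions by a genuinely different route from the paper's, and it is worth comparing. For the non-locality bound, the paper's proof is a bare pigeonhole count: if $|S_\phi^x| < \log_P|\mathcal X|$, the projection of $\mathcal F(x)$ onto $S_\phi^x$ has at most $P^{|S_\phi^x|} < |\mathcal X|$ quantized states, so some $x \neq x'$ collide; with $\Delta := x - x'$ one then picks any $y^\star \in \mathcal X$ with $b^\phi \Delta y^\star \notin \mathbb Z$ and is done. You instead build an explicit digit-window picture (each quantized phase reveals a block of $\approx \log_b P$ base-$b$ digits), locate a digit position $k^\star$ unseen by $S_\phi^x$, construct the specific witness pair $x' = x + b^{k^\star}$, and route the perturbation via $y = b^\ell$. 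This is more illuminating about \emph{which} information is invisible to $\otimes_\phi$, and it dovetails naturally with the convolution view used in Part 2, but -- as you candidly flag yourself -- it carries a real rigor burden that the paper's counting argument simply sidesteps: the paper never needs the colliding witnesses to differ in a single digit, never needs $y$ to be a power of the base, and never needs to reason about quantization thresholds or absorbed carries. Your constructive route therefore requires genericity arguments that are left as sketches, whereas the paper's existence argument needs none. One small slip to fix: you assert $|\Phi| = \Theta(\log_P|\mathcal X|)$, but with $\Phi = \{m,\dots,n\}$ a \emph{consecutive} integer range, unique encoding forces $|\Phi| = \Theta(\log_b|\mathcal X|)$; the quantity $\log_P|\mathcal X|$ is instead the \emph{access} lower bound $d$, which is strictly smaller when $P > b$. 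This conflation does not break your bound on $|S_\phi^x|$, which correctly targets $d$, but it should not be presented as a fact about $|\Phi|$. For Part 2, your bidirectional reducibility argument (cheap change of representation plus Kronecker substitution) is actually sharper than the paper's, which only observes informally that the bilinear expansion of wrapped phases generates redundant cross-terms whose cancellation is "at least as complex as performing the convolution explicitly"; you upgrade this to an inter-reduction with linear-size, constant-depth overhead, which is a cleaner formalization of "functionally equivalent to polynomial multiplication." Both versions land on the same four-stage decode--convolve--carry--re-encode picture.
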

\begin{proof}
\textbf{1. Non-locality.} The proof follows from a counting argument. Assume for contradiction that $|S_\phi^x|<\log_P|\mathcal X|$. Then the projection of $\mathcal{F}(x)|_{S_\phi^x}$, which is the only information about $x$ available to $\otimes_\phi$, can represent at most $P^{|S_\phi^x|}$ states. By the pigeonhole principle, there exist $x_1, x_2 \in \mathcal{X}$ with $x_1 \neq x_2$ that are indistinguishable to the operator, i.e., $\mathcal{F}(x_1)|_{S_\phi^x} = \mathcal{F}(x_2)|_{S_\phi^x}$. Let $\Delta:=x_1-x_2 \neq 0$. The set $Y^*:=\{y\in \mathcal X:\; b^\phi \Delta y\in\mathbb Z\}$ is a proper subset of $\mathcal X$, so we can pick $y^\star\in\mathcal X \setminus Y^*$. Then $\mathcal F_\phi(x_1 y^\star)\neq \mathcal F_\phi(x_2 y^\star)$, yet $\otimes_\phi$ sees identical inputs from $\mathcal F(x_1)$ and $\mathcal F(x_2)$, which is a contradiction. The operator $\otimes_\phi$ is required by its definition to produce these two different outputs. However, since its inputs for $x_1$ and $x_2$ are identical ($\mathcal{F}(x_1)|_{S_\phi^x} = \mathcal{F}(x_2)|_{S_\phi^x}$), it is forced as a function to produce the same output for both. It cannot satisfy both conditions. Thus, the initial assumption must be false and $|S_\phi^x|\ge\lceil\log_P|\mathcal X|\rceil$. The same holds for $|S_\phi^y|$.

\textbf{2. Computational complexity.} 
Multiplication in any positional system is equivalent to the discrete convolution of the coefficients $k$ and $l$, followed by carry propagation.

Write $x=\sum_{i} k_i b^i$ and $y=\sum_{j} l_j b^j$ as the sum of their coefficients. Then
\begin{equation}
xy=\sum_{\tau}(k*l)_\tau b^\tau, \quad
(k*l)_\tau=\sum_{i+j=\tau}k_i l_j.
\end{equation}

Interpreting the circular sinusoidal component in its interval $[0,1)$ wrap-around form,
we can view the phase vector of the encoding, $\Theta_x \in [0, 1)^{|\Phi|}$, as a linear transformation of the coefficient vector $k$ modulo 1. From the definition of the encoding, the component at frequency $\phi$ accumulates contributions from lower-order coefficients:
\begin{equation}
    [\Theta_x]_\phi = \big( \sum_{i < \phi} k_i b^{i-\phi} \big) \bmod 1
\end{equation}
This forms a linear system $\Theta_x \equiv Mk \bmod 1$, where $M$ is a lower-triangular mixing matrix with entries $M_{\phi,i} = b^{i-\phi}$ for $i < \phi$ and $0$ otherwise. Since the encoding uniquely represents the number range, $M$ is invertible over the domain.

To compute the encoding $\mathcal{F}(xy)$ of the product, the operator must produce the phase vector $\Theta_{xy} \equiv M(k * l) \bmod 1$. We analyze two pathways for this from inputs $\Theta_x$ and $\Theta_y$:

\textit{Pathway A: Disentangle First.}
One effectively inverts the mixing matrix to recover coefficients: $k = M^{-1}\Theta_x$ and $l = M^{-1}\Theta_y$. The product is then computed as $\Theta_{xy} = M(M^{-1}\Theta_x * M^{-1}\Theta_y)$. The operation $M^{-1}$ represents the full cost of decoding the sinusoidal representation into positional digits.

\textit{Pathway B: Disentangle Later.}
Alternatively, one might attempt to compute the result directly from the entangled phases. Any bilinear operation on the inputs can be expressed via the Kronecker product $\Theta_x \otimes_K \Theta_y$. Substituting the linear forms yields:
\begin{equation}
    \Theta_x \otimes_K \Theta_y = (Mk) \otimes_K (Ml) = (M \otimes_K M)(k \otimes_K l)
\end{equation}
The vector $k \otimes_K l$ contains all cross-terms $k_i l_j$. To construct the convolution $h$, one must sum all subsets of these cross-terms where $i+j=\tau$. However, the term $\Theta_x \otimes_K \Theta_y$ does not provide direct access to $k_i l_j$, only linear combinations weighted by the expanded mixing matrix $M \otimes_K M$. 

Isolating the necessary convolution terms from the bilinear expansion requires inverting this mixing process. 
This removal $(M \otimes_K M)^{-1} = M^{-1} \otimes_K M{^-1}$ of the cross-term redundancies is therefore computationally at least as expensive as the initial decoding $M^{-1}$.
Moreover, under finite precision, the condition number satisfies $\kappa(M \otimes_K M)=\kappa(M)^2$ with $\kappa(M)\ge1$, which means that postponing disentanglement amplifies quantization errors.

Since both pathways require inverting the linear mixing $M$, there exists no shortcut in the sinusoidal domain.
Any correct operator $\otimes$ must inherently learn the multi-stage procedure: (1) a non-local decoding of the sinusoidal inputs into internal coefficient sequences for $x$ and $y$, (2) a convolution of these sequences, followed by (3) a carry propagation step, and finally (4) a re-encoding of the result into the sinusoidal format.
\end{proof}

The preceding  proposition demonstrates that performing multiplication in sinusoidal encoding space requires a transformation that is both computationally intensive and prone to precision errors. Any network implementing such an operation is forced to first decode, then calculate, and finally re-encode the encoding. This leads us to conclude that sinusoidal encodings alone are not well-suited as a general purpose number representation. 

\textbf{Potential improvements to sinusoidal encoding}
To address the aforementioned difficulties of performing multiplication in sinusoidal encoding space, we experimented with applying a logarithmic transformation of the input values before encoding them. 
In logarithmic space, multiplication reduces to component-wise multiplication, following from the identity $\ln(x_1x_2)=\ln x_1+\ln x_2$ and Lemma~\ref{lem:fourier_sum}.

However, this conversion to a logarithmic space trades one homomorphism for another: multiplication becomes Hadamard-linear, but addition no longer admits a simple local rule, which is a well-known drawback of logarithmic number systems \citep{6810472,1508538}.
Computing $\mathcal F_{\log}(x_1+x_2)$ from $\mathcal F_{\log}(x_1)$ and $\mathcal F_{\log}(x_2)$ requires learning an effective \emph{exp–sum–log} transformation, reintroducing the same aliasing and inversion difficulties as discussed in Proposition~\ref{prop:fourier_mult}.
Consequently, some operations are easiest to learn in linear space and others in log space. 
One could simultaneously encode both spaces and use routing mechanisms, such as mixture-of-experts, to enable models to selectively process either encoding and indicate the correct space for decoding. However, our attempts to build such a system were unsuccessful.

\section{BitTokens}
\label{sec:bittokens}

Guided by the desiderata introduced in \Cref{sec:single_token_strategies}, we propose BitTokens, a novel numeric encoding algorithm. 
BitTokens uses a dedicated, learnable $[\mathtt{NUM}]$ token to which a numeric encoding is added that is based on the IEEE~754 double-precision binary floating-point format (i.e. \texttt{float64}) \citep{IEEE754-1985,IEEE754-2008,IEEE754-2019}.
The floating-point format writes a signed real value $v$ as
\begin{equation}
v = (-1)^{s} \big( 1+\sum_{i=1}^{52}{b_{52-i}2^{-i}} \big) \times 2^{E-1023},
\end{equation}
with $s \in \{0,1\}$ denoting the sign bit, $E \in \{0,\dots,2047\}$ the 11-bit exponent field offset by $1023$, and $b_j \in \{0,1\}$ for $j=0,\dots,51$ the 52 significand bits.

Each bit is mapped to a dimension of the encoding vector, satisfying \Cref{D1,D3}. 
The resulting 64-dimensional number encoding provides a representational range of $[2.23\times10^{-308},\, 1.8\times10^{308}]$ with 15--17 significant decimal digits, satisfying \Cref{D2,D4}.
Additionally, it supports special values such as $\pm0$, $\pm\infty$, and NaN.

\textbf{Encoding construction:}
The binary vector of a number can be efficiently constructed via type reinterpretation and bit shifts.
To allow for the learning of more efficient division algorithms, we also concatenate the binary representation of the number with a binary representation of its reciprocal value.
Scaling the bit vector to $[-1,1]$ yields unit RMS norm inputs, satisfying \ref{D5}.
We then zero-pad the resulting encoding to match the network's embedding size and then add the vector to the learned $[\mathtt{NUM}]$ token embedding.
Using raw bit values ensures numerical stability, satisfying \ref{D6}.

\textbf{Decoding and loss computation:}
During next token prediction, the language model outputs a $[\mathtt{NUM}]$ token to indicate when it is predicting a number.
Then the last hidden state that was used to predict the $[\mathtt{NUM}]$ token is passed through a dedicated number head, predicting the binary encoding vector.
The number head consists of a linear layer followed by a sigmoid activation, with a fixed decision threshold of 0.5 generating the binary vector representation while increasing robustness to prediction noise, thus satisfying \ref{D8}.
During training, computing a regression loss directly on the reconstructed number from the bit vector is infeasible due to the magnitude of potential values as well as the thresholding operations involved in the process.
We therefore employ a bit-wise binary cross entropy loss (BCE) and use equal weighting for each dimension.
This does not strictly satisfy condition \ref{D7}, as illustrated by the predictions $\hat{y}_1 = 8 = 0b1000$ and $\hat{y}_2 = 3 = 0b0011$ for the label $y = 7 = 0b0111$. Although $\hat{y}_1$ is numerically closer to $y$, it incurs a higher loss than $\hat{y}_2$.
In practice, we find that the model is able to account for the structured discontinuities in the target vector, and equal bit weighting further improves performance by encouraging sufficient focus on the more difficult low-significance bits.

\textbf{Arithmetic properties:}
Our encoding possesses several properties which ease the learning of mathematical operators, satisfying \ref{D9}.
First, originating from Leibniz’s introduction of binary arithmetic (\citet{Leibniz1703Binary}), researchers have developed many efficient algorithms for binary addition and multiplication for modern computing hardware that are compatible with IEEE 754 \citep{booth1951signed,wallace2006suggestion,brent1982regular,kogge2009parallel,brent2010modern},
Crucially, because each bit is represented independently in our encoding, models can learn algorithms by directly operating upon our BitTokens.
As Proposition~\ref{prop:fourier_mult} shows, this is not possible with sinusoidal encodings as any multiplication algorithm must first decode, then compute, and finally re-encode the representations.

Second, the IEEE 754 structure separates values into a logarithmic exponent and a linear significand.
The simplest algorithm for addition aligns significands by exponent difference and then adds or subtracts them.
Multiplication is also simple as exponents are directly added, significands are multiplied, and the sign is determined by an XOR operation.
The direct accessibility of individual bits simplifies the network's ability to learn such algorithms.

Third, bit-wise arithmetic over $\mathbb{Z}_2$ results in coefficient-wise operations reducing to Boolean gates: 
\begin{equation}
(x \cdot y) \bmod 2 = x \land y, \quad (x+y) \bmod 2 = x \oplus y
\end{equation}
This simplifies the calculations needed for arithmetic.

Finally, calculations using bits can be parallelized across multiple dimensions of the encoding, reducing the number of sequential steps needed and increasing parameter efficiency for learning algorithms.

\section{Comparing BitTokens to other tokenizers}
\label{sec:experiments}

\begin{figure*}
    \centering
    \includegraphics[width=0.95\linewidth]{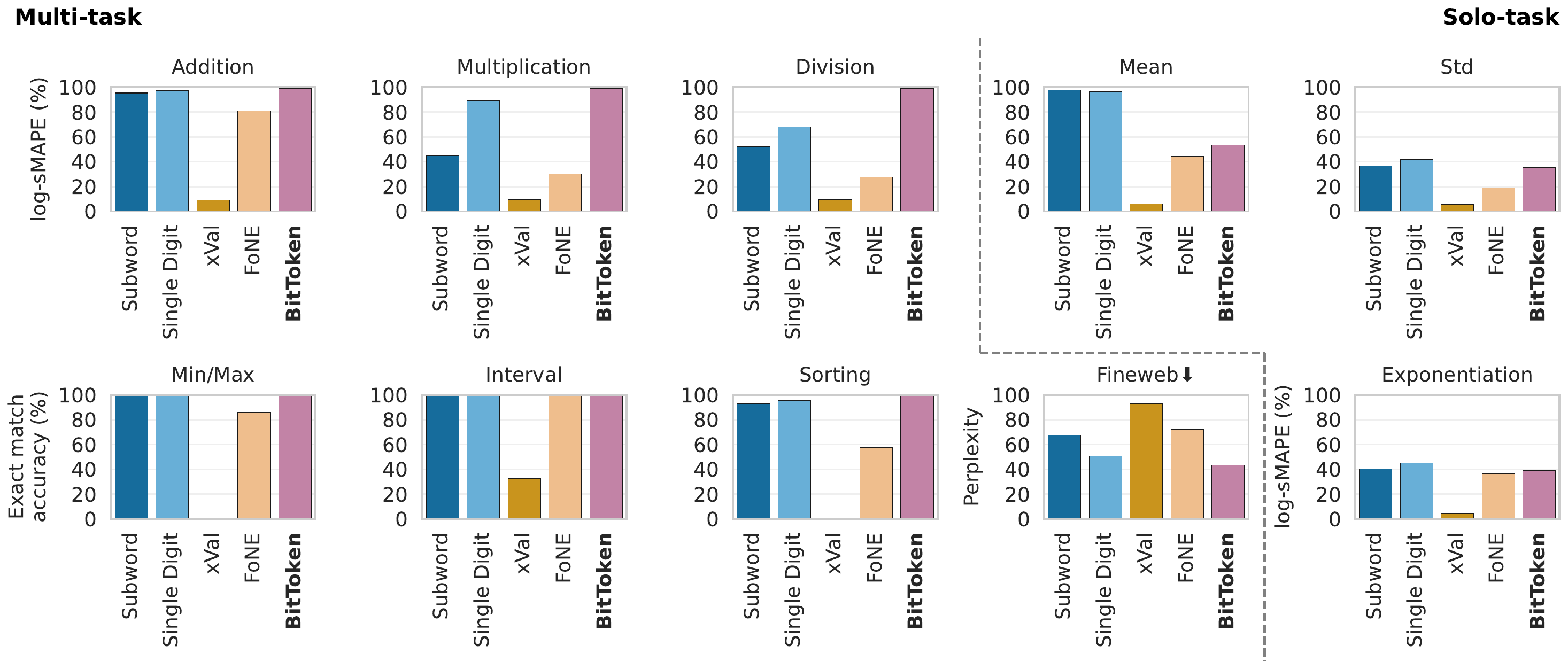}
    \caption{Our BitTokens outperform all other methods in the multi-task setting and is the superior single-token strategy for the multi-step tasks. Between the two multi-token strategies, single digit tokenization performs best, albeit with the highest token cost.}
    \label{fig:results}
\end{figure*}

\textbf{Experimental setup}
We quantify the impact of different tokenization schemes for numeracy in experiments using nanoGPT-2 models \citep{modded_nanogpt_2024}.
The use of these small language models allows us to train models from scratch and conduct a diverse set of experiments in a controlled environment, as suggested by \citet{AllenZhu-icml2024-tutorial}.
A full list of architectural details and hyperparameters is included in \Cref{app:hyperparams}.
We use the same number tasks and datasets as in \Cref{sec:frontier}, generating 30M unique training samples of each task.
Simultaneously, we include the FineWeb 10B dataset to model training dynamics with text \citep{penedo2024the}.

We compare BitTokens to traditional single digit and triple digit (subword) tokenizers, as well as xVal and FoNE.
To enable xVal to process the large number range in our dataset, we rescale the inputs logarithmically instead of linearly to the range of [-5, 5]. 
FoNE only encodes the magnitude of the number, thus for negative numbers, a [$-$] token is added before the absolute number encoding.
We use 17 integer and 32 fraction frequencies with base $b=10$.
\newpage

\textbf{Model training with curriculum learning and dynamic task balancing}
Since each task category contains both easy and difficult problems, we observe that the performance of small language models, regardless of their tokenization method, can be improved by strategically choosing the order of training samples.
We address this by defining a difficulty metric for each problem and employing a curriculum learning strategy with progressive difficulty thresholds, preview sampling, and adaptive advancement criteria, the details of which can be found in \Cref{app:curriculum}.

The goal of our experiments is to train a model that can learn all ten tasks simultaneously, a setup we call multi-task.
During multi-task training, we find that assigning an equal token budget to each of the nine numeracy plus language tasks yields suboptimal performance.
This is due to varying task difficulties and their corresponding imbalanced impact on the overall loss.
To address this, we dynamically control the composition of each minibatch.
An increased sampling probability is assigned to tasks on which the model performs worse.
Additional details and discussion of the dynamic task rebalancing are included in \Cref{app:multitask}.

We find that the exponentiation, mean, and standard deviation tasks are intrinsically difficult for small language models.
We attribute this to the fact that solving these tasks requires sequential application of multiple arithmetic operations.
While dynamic rebalancing improves overall performance, a disproportionate token budget would be assigned to these three tasks, preventing convergence on the remaining tasks as shown in \Cref{tab:ablation_capped_ratio}.
For this reason, we conduct multi-task experiments on the comparison and single-step calculation tasks plus text.
For exponentiation, mean, and standard deviation, we report the solo-task performance.
\newpage

\textbf{Results}
\label{sec:results}
The results of our experiments are shown in \Cref{fig:results}.
Multi-task results show that xVal performs poorly on all tasks, primarily due to its limited precision in representing both input and output.
While FoNE is able to learn addition, it struggles with multiplication and division as predicted in \Cref{sec:limitations}.
Among the multi-token strategies, single-digit embeddings consistently outperform subword embeddings.
Finally, our BitToken method outperforms all other methods and achieves near-perfect performance on comparison and single-step calculation tasks. Low perplexity scores show that language understanding is not impacted.

Solo-task results indicate that all models struggle with computing standard deviation and exponentiation.
Notably, for the mean task, single digit and subword tokenizer methods outperform the single-token methods.
We attribute this advantage to the fact that multi-token methods generate answers over multiple forward passes.
These models can refine their answers one digit at a time, making it possible to learn iterative algorithms.
In contrast, single-token methods need to perform the same work in a single step.
While such a built-in iterative behavior can be beneficial in some cases, it also substantially increases computational costs and generation time for simpler tasks, as shown in \Cref{tab:lm_token_count}.
The ablations in \Cref{app:ablations} demonstrate the effectiveness of base-2 encodings over base-10, the benefit of appending the reciprocal, and that curriculum learning substantially boost performance for all methods.

\section{Discussion and conclusion}
Single-token number representations efficiently encode numerical information in language models.
This could reduce the dependency of current frontier LLMs on excessive amounts of reasoning tokens to solve arithmetic problems while simultaneously reducing the amount of tokens needed to encode numbers.
We substantially advance this promising line of research by introducing a set of nine desiderata that ensure a number encoding is efficient, trainable, and effective. 
Based on these, we propose BitTokens, overcoming the drawbacks of existing single-token approaches.

Our study exclusively evaluates BitTokens on small language models.
Although our experiments show that BitTokens integrate well with text during next-token prediction, incorporating them into larger models with broader pretraining data would allow for a more holistic evaluation of their capabilities.
Questions include whether BitTokens enhance the ability of LLMs to memorize and recite numerical facts and the interaction with text in math word problems.
Moreover, we do not fully outline the steps required to integrate BitTokens in production-level LLMs.
These include efficient strategies to robustly identify and parse numbers in input text, accounting for different arithmetic notations, and ensuring models output the correct level of precision for a given context and respect formatting when appropriate, potentially
using dedicated $[\mathtt{INT}]$ and $[\mathtt{FLOAT}]$ tokens.

While our BitTokens aim to replace excessively long reasoning chains for simple mathematical calculations, reasoning and single-token strategies are not mutually exclusive.
Combining them could further improve LLM capabilities on the most complicated problems.
By enabling efficient and accurate calculations on intermediate steps, more tokens can be dedicated to logical reasoning, expanding the complexity of problems language models can solve.

\section*{Acknowledgements}
This work was partially funded by ERC Grant Deep4MI (Grant No. 884622).
The authors gratefully acknowledge the scientific support and HPC resources provided by the Erlangen National High Performance Computing Center (NHR@FAU) of the Friedrich-Alexander-Universitaet Erlangen-Nuernberg (FAU) under the NHR project b247bb. NHR funding is provided by federal and Bavarian state authorities. NHR@FAU hardware is partially funded by the DFG under project 440719683.
Martin J. Menten is funded by the German Research Foundation under project 532139938.

\section*{Impact Statement}

This paper presents work whose goal is to advance the field of Machine
Learning. There are many potential societal consequences of our work, none
which we feel must be specifically highlighted here.

\bibliography{example_paper}
\bibliographystyle{icml2026}

\newpage
\appendix
\onecolumn
\section{Reproducibility}
We took several steps to ensure our results are reproducible. Dataset construction is specified in \Cref{sec:frontier}, with the exact number-sampling scheme and the task difficulty metric detailed in \Cref{app:benchmark_distribution} and \Cref{app:difficulty}. The full experimental setup for the BitToken method and all baselines is described in \Cref{sec:experiments}, with model architectures and hyperparameter configurations in \Cref{app:experimental_setup}, and additional details for curriculum learning and multi-task training in \Cref{app:curriculum} and \Cref{app:multitask}.
Our code is available at \url{https://github.com/KreitnerL/BitTokens}. This repository includes code for the frontier-model benchmark (including evaluation scripts), the BitToken method, and for running the experiments. These materials provide the code and configuration files needed to reproduce the reported experiments and regenerate the results referenced in the paper.

\section{Benchmarking dataset}
\label{app:FrontierBenchmark}
\subsection{Tasks}
\label{app:tasks}
\subsubsection{Comparing numbers}
The most elementary numeracy task is comparing two random numbers, $n_1$ and $n_2$.
As this task is near-perfectly solved by modern LLMs, we increase the difficulty to multiple operands.

\textbf{Determining the minimum and maximum}

Determining the minimum or maximum of a list of numbers is in essence a series of number comparisons where one number must be held in memory.
Each list is comprised of $l$ floats, with $l \in \{2,3,4,5\}$.
We use the following phrasings:

\begin{promptbox}[title=MinMax phrasing]
\texttt{What is the maximum of the list [$n_1$, $n_2$]}\\
\texttt{What is the minimum of the list [$n_1$, $n_2$, $n_3$, $n_4$, $n_5$]}
\end{promptbox}

Our system prompt is:

\begin{promptbox}[title=MinMax system prompt]
\texttt{You are an expert in numeracy. For each problem, output only valid JSON in this format: \\
{"answer": <numeric\_answer>}\\
Do not explain, show steps, or add any extra text. Do not use code blocks to output the answer.\\
DO NOT CALL ANY external APIs or use ANY external tool to solve the problem. DO NOT USE a calculator tool. DO NOT USE python. DO NOT USE Wolfram Alpha.\\
The answer must be a single number, exactly as it appears in the list.}
\end{promptbox}

We evaluate by calculating the exact string match accuracy of the answer.

\textbf{Interval assignment}

Determining if a number falls within an interval can be achieved by a series of pairwise comparisons, where two numbers must be held in memory.
Each list is comprised of $l$ floats, with $l \in \{2,3,4,5\}$.
We use the following phrasing:

\begin{promptbox}[title=Interval phrasing]
\texttt{What interval does x=$n$ belong to? A: x < $n_1$, B: $n_1$ <= x < $n_2$, C: $n_2$ <= x}
\end{promptbox}

\newpage

Our system prompt is:

\begin{promptbox}[title=Interval system prompt]
\texttt{You are an expert in numeracy. For each problem, output only valid JSON in this format: \\
{"answer": <interval\_multiple\_choice\_answer>}\\
Do not explain, show steps, or add any extra text. Do not use code blocks to output the answer.\\
DO NOT CALL ANY external APIs or use ANY external tool to solve the problem. DO NOT USE a calculator tool. DO NOT USE python. DO NOT USE Wolfram Alpha.\\
The answer must be one of the following: A, B, C, D, E, F.}
\end{promptbox}

We evaluate by calculating the exact string match accuracy of the answer.

\textbf{List sorting}

Sorting a list of numbers is in requires more numbers to be held in memory (depending on the ``algorithm" used by the LLM).
Each list consists of $l$ floats, with $l \in \{2,3,4,5\}$.
We use the following phrasing:

\begin{promptbox}[title=Sorting phrasing]
\texttt{Sort the list [$n_1$, $n_2$, $n_3$, $n_4$, $n_5$] in ascending order.}\\
\texttt{Sort the list [$n_1$, $n_2$, $n_3$] in descending order.}
\end{promptbox}

Our system prompt is:
\begin{promptbox}[title=Sorting system prompt]
\texttt{You are an expert in numeracy. For each problem, output only valid JSON in this format: \\
{"answer": <sorted\_list>}\\
Do not explain, show steps, or add any extra text. Do not use code blocks to output the answer.\\
DO NOT CALL ANY external APIs or use ANY external tool to solve the problem. DO NOT USE a calculator tool. DO NOT USE python. DO NOT USE Wolfram Alpha.\\
The answer must be a list of numbers.}
\end{promptbox}

We evaluate by calculating the exact string match accuracy of the answer.

\subsubsection{Single-step arithmetic}

We test both addition and subtraction capabilities in the addition task, but
separate multiplication and division as the generation of the multiplicative inverse is not trivial.
We use the following phrasing:

\begin{samepage}
\begin{promptbox}[title=Single-step arithmetic phrasing]
\texttt{What is $n_1$ + $n_2$?}\\
\texttt{What is $n_1$ - $n_2$?}\\
\texttt{What is $n_1$ * $n_2$?}\\
\texttt{What is $n_1$ / $n_2$?}
\end{promptbox}
\end{samepage}

\newpage

Our system prompt is:

\begin{promptbox}[title=Single-step arithmetic system prompt]
\texttt{You are an expert in numeracy. Return exactly one valid JSON object in this format: \\
{"answer": <numeric\_answer>}\\
Do not explain, show steps, or add any extra text. Do not use code blocks to output the answer.\\
DO NOT CALL ANY external APIs or use ANY external tool to solve the problem. DO NOT USE a calculator tool. DO NOT USE python. DO NOT USE Wolfram Alpha.\\
If the answer is not an integer, give it as a decimal (not a fraction), rounded to at most 15 significant digits.}
\end{promptbox}

We use $\mathrm{log\textnormal{-}sMAPE}$ and exact match accuracy as metrics.

\subsubsection{Multi-step arithmetic}

The exponentiation task includes both positive and negative exponents as well as float exponents, thus testing both power and root operations.
We also test the ability to calculate both the mean and the standard deviation.
Calculating the mean requires adding all numbers in the list and then dividing by the length of the list.
Calculating the standard deviation requires calculating the mean of the list, then calculating the average difference between the mean and each element squared, and then taking the square root of that average.
We use a single set of number lists for both tasks.
Each list comprises $l$ floats, with $l \in \{2,3,4,5\}$.
We use the following phrasing:

\begin{promptbox}[title=Mean{,} Std{,} and exponentiation phrasing]
\texttt{What is the mean of the list [$n_1$, $n_2$, $n_3$]?} \\
\texttt{What is the std of the list [$n_1$, $n_2$, $n_3$, $n_4$]?} \\
\texttt{What is $n_1$ \string^ $n_2$?}
\end{promptbox}

Our system prompt is:

\begin{promptbox}[title=Multi-step arithmetic system prompt]
\texttt{You are an expert in numeracy. Return exactly one valid JSON object in this format: \\
{"answer": <numeric\_answer>}\\
Do not explain, show steps, or add any extra text. Do not use code blocks to output the answer.\\
DO NOT CALL ANY external APIs or use ANY external tool to solve the problem. DO NOT USE a calculator tool. DO NOT USE python. DO NOT USE Wolfram Alpha.\\
If the answer is not an integer, give it as a decimal (not a fraction), rounded to at most 15 significant digits.}
\end{promptbox}

We use $\mathrm{log\textnormal{-}sMAPE}$ and exact match accuracy as metrics.

\FloatBarrier

\subsection{Number sampling}
\label{app:benchmark_distribution}
We sample all numbers from the interval $[10^{-14}, 10^{15})\cup \{0\}$ and ensure that the result is in the same interval.
For each task, we sample the operands $n\sim\mathcal U(10^x,10^{x+1})$, where $x\sim\mathcal U(-14,15)$.
We ensure that all numbers can be represented without loss of information by a \texttt{float64} tensor and round all answers to at most 15 significant digits. Further, we ensure that the operation of both numbers does not result in vanishing information. For instance, computing the mean of the list $[10^{14},10^{-5},-10^{14}]$ in python would be evaluated as $0$ because the intermediate result $10^{14}+10^{-5}$ cannot be represented given the limited precision of 15-17 digits. 

When used by humans, numbers often occur a reduced number of decimal digits, while full precision is used in scientific and engineering environments. To reflect both use cases, we uniformly sample the number of significant digits of the values from a uniform distribution.

For each task, we sample 30M train samples, 10k validation and 10k test samples.
In the following, we outline task specific generation parameters.

\textbf{MinMax / Interval / Sorting}
We draw the list length $l\in\{2,3,4,5\}$ and the magnitude of the mean of all operands from $x\sim\mathcal U(-14,15)$. We then sample operands for each spread $s\in[\min (13,x-13), \min(x+2,13)]$. This increases the number of samples where the numbers share a common prefix and only differ in few digits, thus increasing difficulty. We then draw $m\sim\mathcal U(10^x,10^{x+1})$ and generate a list of numbers $[n_1,\dots,n_l],\ n_i\in[m-s,m+s]$ such that $\frac{1}{l}\sum^l_{i=0}n_i=m$. We ensure that each list element only has a maximum of $p\sim\mathcal U(1,17)$ significant digits. Because of the limited precision, we cannot guarantee that the mean will be $m$. We therefore draw 1000 lists and take the one that has the closest mean. We ensure that $\sigma\neq0$, as well as $m\neq0$ if $l\in\{2,4\}$ to avoid trivial samples.

For the MinMax task, we randomly choose whether we require the minimum or the maximum of the list. For the sorting task, we randomly choose whether the list should be sorted ascending or descending. For the interval task, we randomly choose a position index $pos\in[0,\dots,l]$. We then draw a reference value from the sorted list $L$:
\begin{equation}
 ref\sim\begin{cases} 
      L_0-10^{s}/l & pos=0 \\
      \mathcal U(L_{pos-1}, L_{pos}) & 0< pos< l \\
      L_{l-1}+10^{s}/l & pos=l 
   \end{cases}
\end{equation}

\textbf{Addition}
A core challenge during addition lies in the carry propagation. If the exponents of the operands are farther apart than the precision limit of \texttt{float64}, the answer to the calculation is simply the larger number. We therefore prioritize sampling of cases where both operands are of similar magnitude (see \Cref{fig:addition_dist}). We sample the combined operand precision $p_{12}\in\mathcal U(2,34)$ from a uniform distribution. We then draw $p_1\in\mathrm{triangular}(l,l+1, \min(p_{12}-1, 17))$ with $l=\lceil\frac{p_{12}}{2}\rceil$ from a triangular distribution and set $p_2=p_{12}-p_{1}$. We then round the operands by $p_1$ and $p_2$ respectively and randomly swap operands. We choose the individual signs using the following schema: (1) both positive in $40\%$ of cases, (2) only one operand negative in $40\%$ of cases and (3) both negative in $20\%$ of cases.
We randomly choose an operator $op\in\{+,-\}$.

\begin{figure}[t]
    \centering
    \begin{subfigure}[t]{0.48\linewidth}
        \centering
        \includegraphics[width=\linewidth]{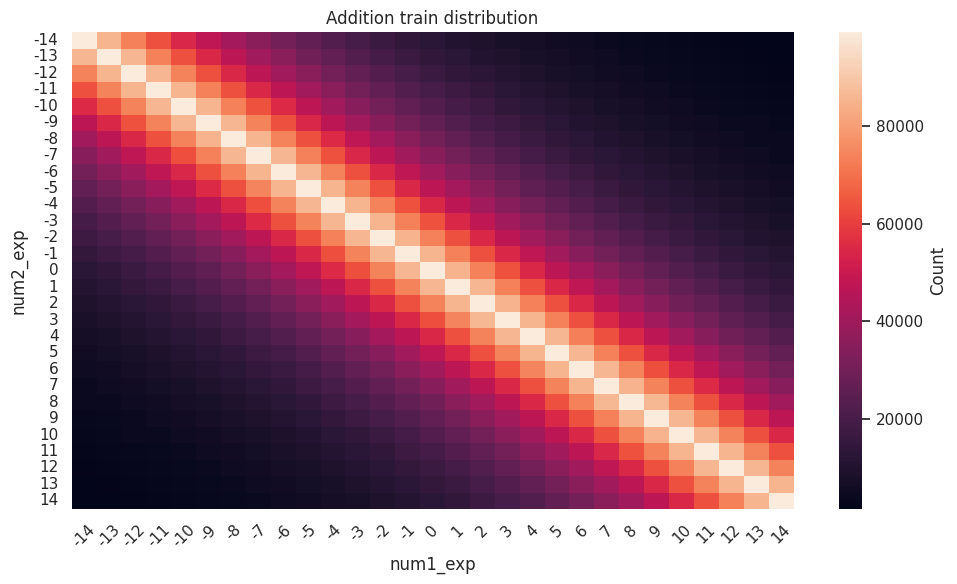}
        \caption{Addition pairs. Operands with similar exponents are oversampled to increase difficulty.}
        \label{fig:addition_dist}
    \end{subfigure}\hfill
    \begin{subfigure}[t]{0.48\linewidth}
        \centering
        \includegraphics[width=\linewidth]{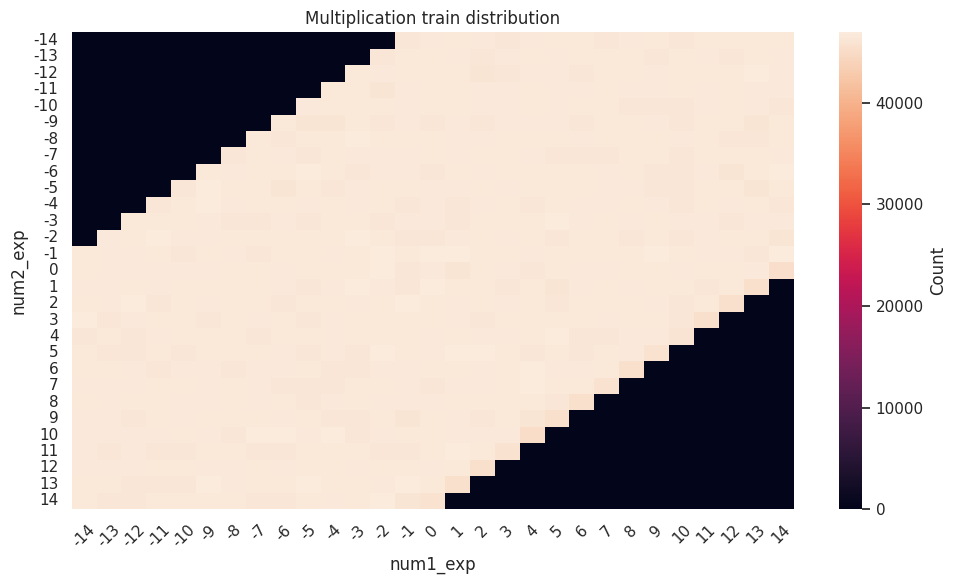}
        \caption{Multiplication pairs.}
        \label{fig:multiplication_dist}
    \end{subfigure}
    \caption{Magnitude distributions of arithmetic pairs in our dataset.}
    \label{fig:task_dist}
\end{figure}

\textbf{Multiplication}
We handle precision of the operands and their signs similar to the addition task, however, we do not swap the operands as this could result in the product being out of bounds.

\textbf{Division}
Similar to addition, we draw the combined precision of the quotient and the divisor and sample their individual precisions. We then compute the dividend from the quotient and divisor.
We change signs similar to the addition task, however, we do not swap the operands as this could result in the product being out of bounds.

\textbf{Exponentiation}
We draw the exponent $x$ from the interval $[-14,15)$ and increase the number of samples closer to zero. We choose base $b\sim\mathcal U(10^x,10^{x-1})$, choose $\xi\in\Big[\lceil\frac{10^{-13}}{|x|+1}\rceil, \lfloor\frac{10^{14}}{|x|+1}\rfloor\Big]$ and set for all $\xi\not\in\{0,1\}$:
\begin{equation}
    exp=\begin{cases}
        \xi & \xi>1 \\
        1/\xi & \textrm{else}
    \end{cases}
\end{equation}
This procedure ensures that we only use integer powers and integer roots.
If $b>1$, we randomly set $b$ negative. We also choose the sign of $exp$ randomly.

\textbf{Mean and standard deviation}
We generate a list similar to the method described for task MinMax but choose $s\in[\min(13,x-13),\ \min(x+17, 13)]$ since here, a smaller spread would overly simplify the task.

\subsection{Problem difficulty}
\label{app:difficulty}
Our experiments show that our small scale transformer models benefit from curriculum learning. 
To maximize sample efficiency, we manually define a difficulty heuristic that is used for sampling. 
This avoids inefficient loss based selection. 
Curriculum training is only used for the most difficult tasks multiplication, division, exponentiation, mean and standard deviation. 

Most difficulty heuristics are defined using the digit representation of the operands. 
In the following, we will describe the difficulty for each task depending on its base. We ensure difficulty $\delta_\tau\in\mathbb N_0$.

\textbf{Multiplication}
Given the fixed-point number representations of the operands in base-2 or base-10, the difficulty $\delta_{\textrm{Multiplication}}$ is given by the sum of their non-zero digits. This directly correlates with the number of steps involved to solve the task and the numbers' precision.

\textbf{Division}
Similar to Multiplication, the difficulty $\delta_{\textrm{Multiplication}}$ is given by the sum of the non-zero digits of dividend, divisor and quotient.

\textbf{Mean}
For Mean we found the most reliable difficulty metric to be $\delta_\mathrm{Mean}=s-x+15$ with spread $s$ and exponent $x$ of the mean of the list.
Intuitively, the difficulty increases for larger spreads. We see a clear drop in performance for all models where $s>x$, that is, where numbers do no longer share any common prefix.

\textbf{Exponentiation / Standard deviation}
While the difficulty heuristic for Multiplication, Division and Mean can be derived relatively straightforward, a heuristic for the remaining tasks is a lot harder to define. This is because of their inherent multi-step nature. Each result can be computed using a large variety of different algorithms. For instance, consider the task of computing the standard deviation of a list $L=\{x_1,\dots,x_N\}$ with mean $\mu$. Possible solutions include $\sqrt{\frac1N \sum^N_{i=1}(x_i-\mu)}$, $\sqrt{\frac{1}{N}\sum_{i=1}^N x_i^2-\mu^2}$, Welford's online algorithm, and many more.
We define heuristic $\delta_\mathrm{exp}$ as the product of the number of non-zero digits of each base, exponent, and result and, for simplicity reasons, set $\delta_\mathrm{Std}=\delta_\mathrm{Mean}$.

\subsection{Subsampling}

Due to cost constraints we had to select a subset of the full set of 10,000 test samples when evaluating the frontier LLMs.
As shown in \Cref{fig:difficulty_sd_per_task_per_benchmark}, the benchmark for the frontier LLMs and our BitTokens follow the same difficulty distribution.

\begin{figure}[b]
    \centering
    \includegraphics[width=\linewidth]{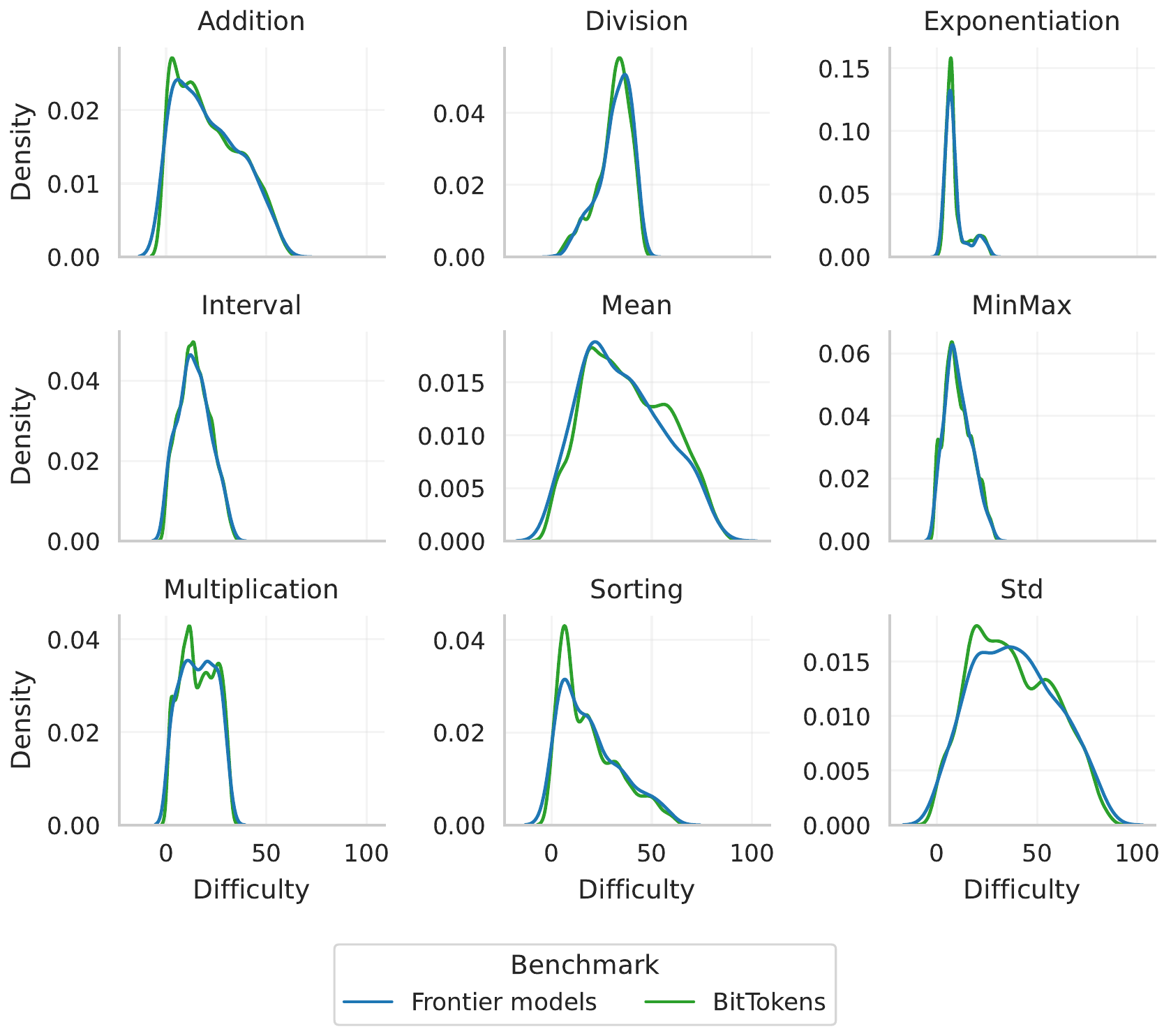}
    \caption{The benchmark for the frontier LLMs is a 500 sample subset of the BitTokens test, but follows the same difficulty distributions.}
    \label{fig:difficulty_sd_per_task_per_benchmark}
\end{figure}
\FloatBarrier

\newpage
\section{Frontier benchmark results}
\label{app:frontier_benchmark_results}

\subsection{Frontier large language models:}
\label{app:frontier_models}
We benchmark eight frontier LLMs \citep{deepseekai2024deepseekv3technicalreport,comanici2025gemini,Meta2025LLaMA4,team2025kimi,OpenAI2025GPT5,agarwal2025gpt,yang2025qwen3} -- both open and closed source -- under maximal and minimal reasoning settings.
`Maximal' reasoning means that we select `high' reasoning for GPT 5 and GPT OSS 120B, set a thinking budget of 32768 tokens for Gemini 2.5 Pro , and set a budget of 24576 tokens for Gemini 2.5 Flash.
All models with permanently enabled reasoning on were also labeled `Maximal'. 
`Minimal' reasoning means that we set reasoning to `minimal' for GPT 5, and we set a thinking budget of 128 tokens for Gemini 2.5 Pro.
Models without reasoning or where it can be completely disabled are denoted as `None' reasoning models.
All models use subword or triple digit tokenization for numbers, except Gemini 2.5 Flash and Gemini 2.5 Pro, which use single digit tokenization.

\subsection{log-sMAPE performance}
\begin{table}[ht]
    \centering
    \footnotesize
    \setlength{\tabcolsep}{4pt}
    \begin{tabular}{llrrrrrrrrr}
    \toprule
    Model & Reasoning & Add & Mult & Div & Mean & Std & MinMax & Interval & Sort & Exp \\
    \midrule
    GPT OSS 120B & maximal & 97.6 & 95.8 & 80.0 & 95.6 & 46.6 & 99.2 & 100.0 & 99.0 & 56.0 \\
    Qwen3 235B 2507 & maximal & 98.7 & 88.0 & 77.8 & 94.5 & 48.8 & 100.0 & 100.0 & 99.8 & 56.9 \\
    Qwen3 235B 2507 & none & 87.5 & 39.0 & 46.4 & 59.4 & 12.5 & 95.8 & 98.6 & 98.8 & 21.7 \\
    DeepSeek v3.1 & maximal & 97.0 & 85.3 & 84.2 & 95.0 & 40.9 & 99.8 & 99.8 & 99.6 & 48.6 \\
    DeepSeek v3.1 & none & 94.1 & 38.5 & 40.8 & 66.2 & 15.5 & 99.8 & 81.4 & 99.6 & 21.0 \\
    Kimi K2 0905 & none & 89.6 & 44.5 & 46.8 & 67.4 & 13.6 & 99.6 & 96.8 & 98.4 & 19.9 \\
    Llama 4 Maverick & none & 79.1 & 33.3 & 35.8 & 60.0 & 11.8 & 99.4 & 76.0 & 95.6 & 20.1 \\
    GPT 5 & maximal & 100.0 & 99.5 & 96.7 & 96.1 & 74.4 & 100.0 & 100.0 & 99.8 & 78.6 \\
    GPT 5 & minimal & 93.7 & 30.9 & 29.5 & 58.2 & 16.5 & 98.8 & 92.4 & 98.4 & 19.0 \\
    Gemini 2.5 Pro & maximal & 97.4 & 64.3 & 74.8 & 94.3 & 29.9 & 100.0 & 100.0 & 99.8 & 45.0 \\
    Gemini 2.5 Pro & minimal & 99.6 & 43.4 & 51.3 & 81.9 & 13.0 & 100.0 & 99.6 & 100.0 & 25.5 \\
    Gemini 2.5 Flash & maximal & 96.4 & 44.9 & 56.3 & 91.6 & 27.5 & 100.0 & 100.0 & 100.0 & 28.9 \\
    Gemini 2.5 Flash & none & 92.8 & 32.8 & 45.2 & 67.6 & 13.6 & 100.0 & 99.2 & 99.8 & 20.3 \\
    \bottomrule
    \end{tabular}
    \caption{Average performance of each frontier model on each task. The performance of addition (Add), multiplication (Mult), division (Div), exponentiation (Exp), standard deviation (Std), and mean are all measured in $\mathrm{log\textnormal{-}sMAPE}$. The performance of MinMax, Interval, and Sorting are all measured in exact match accuracy.}
    \label{tab:frontier_performance_logSMAPE}
\end{table}

\subsection{Exact match accuracy}

\begin{table}[ht]
    \centering
    \begin{tabular}{llrrrrrr}
    \toprule
    Model & Reasoning & Add & Mult & Div & Mean & Std & Exp \\
    \midrule
    GPT OSS 120B & maximal & 89.4 & 87.2 & 45.4 & 71.2 & 12.6 & 13.4 \\
    Qwen3 235B 2507 & maximal & 92.0 & 64.2 & 47.0 & 71.0 & 12.8 & 10.4 \\
    Qwen3 235B 2507 & none & 64.4 & 18.2 & 15.6 & 37.2 & 3.0 & 3.4 \\
    DeepSeek v3.1 & maximal & 92.4 & 63.6 & 54.0 & 71.2 & 11.0 & 10.6 \\
    DeepSeek v3.1 & none & 74.6 & 20.2 & 16.4 & 42.6 & 3.4 & 3.2 \\
    Kimi K2 0905 & none & 70.2 & 23.2 & 21.0 & 44.0 & 2.6 & 2.8 \\
    Llama 4 Maverick & none & 50.0 & 15.6 & 13.6 & 34.6 & 2.2 & 2.8 \\
    GPT 5 & maximal & 98.4 & 97.8 & 91.4 & 73.8 & 34.4 & 46.8 \\
    GPT 5 & minimal & 76.6 & 14.2 & 6.6 & 33.2 & 3.8 & 2.4 \\
    Gemini 2.5 Pro & maximal & 91.0 & 42.2 & 48.0 & 67.6 & 8.8 & 8.2 \\
    Gemini 2.5 Pro & minimal & 82.2 & 20.6 & 21.8 & 55.8 & 4.4 & 3.4 \\
    Gemini 2.5 Flash & maximal & 84.0 & 27.8 & 31.6 & 63.4 & 7.0 & 5.6 \\
    Gemini 2.5 Flash & none & 72.2 & 13.6 & 17.4 & 40.8 & 2.8 & 2.2 \\
    \bottomrule
    \end{tabular}
    \caption{Average performance of each frontier model on each task. The performance metric for addition (Add), multiplication (Mult), division (Div), exponentiation (Exp), mean, and standard deviation (Std) is exact match accuracy on the first 15 significant digits}
    \label{tab:frontier_performance_exactMatch}
\end{table}

\FloatBarrier

\newpage
\subsection{Average tokens per problem}
\begin{table}[ht]
    \centering
    \footnotesize
    \setlength{\tabcolsep}{4pt}
    \begin{tabular}{llrrrrrrrrr}
    \toprule
    Model & Reasoning & Add & Mult & Div & Mean & Std & MinMax & Interval & Sorting & Exp \\
    \midrule
    GPT OSS 120B & maximal & 708 & 1820 & 3133 & 446 & 2056 & 105 & 218 & 193 & 2815 \\
    Qwen3 235B 2507 & maximal & 3899 & 5789 & 8161 & 3359 & 8565 & 965 & 1379 & 1248 & 9889 \\
    Qwen3 235B 2507 & none & 22 & 22 & 23 & 20 & 23 & 21 & 7 & 63 & 39 \\
    DeepSeek v3.1 & maximal & 5515 & 6921 & 9079 & 4821 & 11486 & 976 & 1490 & 1214 & 9459 \\
    DeepSeek v3.1 & none & 13 & 12 & 12 & 12 & 13 & 12 & 7 & 34 & 94 \\
    Kimi K2 0905 & none & 13 & 78 & 13 & 12 & 45 & 12 & 7 & 33 & 78 \\
    Llama 4 Maverick & none & 12 & 11 & 12 & 12 & 13 & 12 & 7 & 34 & 13 \\
    GPT 5 & maximal & 3779 & 8749 & 16165 & 3613 & 19709 & 585 & 514 & 669 & 28086 \\
    GPT 5 & minimal & 21 & 21 & 22 & 20 & 21 & 20 & 15 & 39 & 21 \\
    Gemini 2.5 Pro & maximal & 4761 & 11901 & 11462 & 4221 & 9281 & 574 & 1013 & 864 & 8748 \\
    Gemini 2.5 Pro & minimal & 108 & 101 & 100 & 111 & 165 & 111 & 173 & 117 & 95 \\
    Gemini 2.5 Flash & maximal & 3084 & 13698 & 15390 & 2494 & 17751 & 290 & 541 & 382 & 20149 \\
    Gemini 2.5 Flash & none & 22 & 22 & 22 & 19 & 22 & 20 & 6 & 63 & 22 \\
    \bottomrule
    \end{tabular}
    \caption{Average number of tokens generated by each frontier model for each problem.}
    \label{tab:frontier_tokens}
\end{table}

\subsection{Performance Stability}

\begin{figure}[H]
    \centering
    \includegraphics[width=0.85\linewidth]{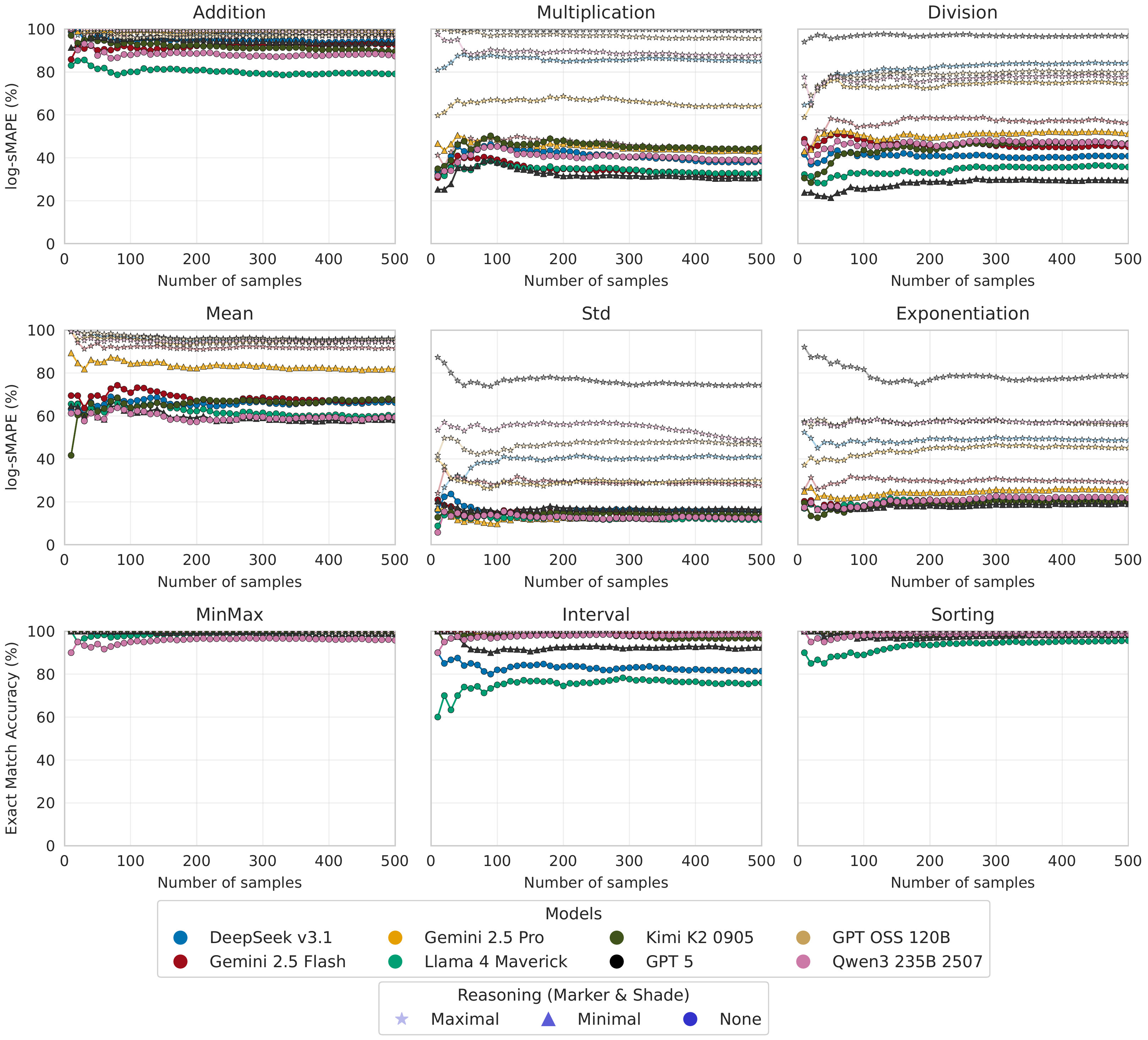}
    \caption{Due to cost constraints we had to subset the full set of 10,000 test samples to 500 samples per task when evaluating the frontier LLMs. We observe performance to already be stable at 500 samples.}
    \label{fig:methods}
\end{figure}

\newpage

\section{Experimental setup}
\label{app:experimental_setup}

\subsection{Architecture and hyperparameter configuration}
\label{app:hyperparams}

We choose the nanoGPT-2 architecture with 6 layers, 6 attention heads, and an embedding dimension of 768, resulting in a model with 117M parameters. 
Following \citet{modded_nanogpt_2024}, we use the Muon optimizer \citep{jordan2024muon} and modernize the standard architecture of GPT-2 by adding rotary positional embeddings (RoPE) \citep{SU2024127063}, QK-normalization, and flash attention 2 \citep{dao2024flashattention} with sequence packing. The dropout probability is set to $0$ to avoid interference with number encodings. Following \citep{modded_nanogpt_2024}, all non-scalar parameters in the hidden layers are optimized using the Muon optimizer \citep{jordan2024muon} with a learning rate of $0.02$ and momentum of $0.95$. All other parameters are trained with the Adam optimizer (no weight decay) using $\beta_1 = 0.9$ and $\beta_2 = 0.95$. Scalar parameters share the same learning rate of $0.02$. The embedding layer weights are untied from the output layer and trained with a learning rate of $0.03$. All remaining parameters, including both the language and number heads, use a learning rate of $0.004$.

Training is performed with a maximum token budget of 10B tokens, a context length of 1024, and an effective batch size of 192. We employ a cosine learning rate scheduler and allocate 10\% of the training tokens for warm-up. Additionally, the Muon optimizer uses 300 momentum warm-up steps. Model performance is evaluated every 32 steps for 2 steps on a small validation set, and the sampling ratio is dynamically adjusted based on current results. After training, we select the checkpoint with the best harmonic mean performance across all tasks.

The training objective is standard next-token prediction, excluding the question tokens from the loss computation. The per-token loss is defined as:
\[
\mathcal{L}_\mathrm{total} = \mathrm{CrossEntropy}(y_{\mathrm{pred}}, y_{\mathrm{true}}) + 10 \times \mathcal{L}_{\mathrm{num}}(n_{\mathrm{pred}}, n_{\mathrm{true}}),
\]
where $\mathcal{L}_{\mathrm{num}}$ is either binary cross-entropy, cross entropy, or mean squared error (MSE), depending on the number encoding strategy.
For testing, we generate tokens by selecting the token with the highest probability.

\subsection{Number parsing}
To parse numerical values from text input to transform them into dedicated number tokens, we use the following regex pattern:

\begin{promptbox}[title=Python regex pattern for signed values (BitToken / xVal)]
r\texttt{"[-]?(?:(?:0(?!\textbackslash .[0-9]))|(?:[0-9]*[.][0-9]+)|(?:[1-9][0-9]*))"}
\end{promptbox}
\begin{promptbox}[title=Python regex pattern for absolute values (FoNE)]
r\texttt{"(?:(?:0(?!\textbackslash .[0-9]))|(?:[0-9]*[.][0-9]+)|(?:[1-9][0-9]*))"}
\end{promptbox}

The current regex patterns do not parse numbers written in scientific notation as a single value. That is because the use of scientific notation format may be important to the context of the source text. Future work could investigate the use of dedicated tokens for different number formats, such as scientific notation or integer representation. 

\subsection{Curriculum learning}
\label{app:curriculum}

We observe that small-scale transformer models benefit significantly from curriculum learning when trained on our numerical tasks.
To optimize computational efficiency, we implement an automated curriculum scheduler that selects only those samples $\{s_\tau: \delta(s_\tau)\le\delta_\tau^{\text{frontier}}\}$ whose difficulty does not exceed the current frontier threshold for a given task. For this purpose, we use the difficulty heuristic $\delta(s, \tau)$ defined in \Cref{app:difficulty}

The initial frontier difficulty is set to $\delta_\tau^{\text{frontier}} = 0.1 \cdot \delta_\tau^{\text{max}}$, and advances once the model's performance surpasses a predefined threshold, $p(\delta_\tau^{\text{frontier}}, \tau) > \zeta$, where $\zeta = 0.9$. To enhance sample efficiency, we use Equation~\ref{eq:inv_gen_mean} with $\lambda = 0$ to dynamically compute sampling ratios for each difficulty level based on current performance. This strategy helps mitigate catastrophic forgetting while allocating training tokens to areas of greatest need.

To prevent abrupt shifts in the sample distribution during difficulty transitions, we reserve $20\%$ of the task's token budget for difficulty preview. The corresponding sampling ratios are computed using an exponential decay function:

\begin{equation}
    r_{\tau,\delta} = 0.2 \cdot \frac{0.8^{\delta - \delta_\tau^{\text{frontier}}}}{\sum_{\delta > \delta_\tau^{\text{frontier}}} 0.8^{\delta - \delta_\tau^{\text{frontier}}}}
\end{equation}

To ensure models do not stagnate at lower difficulty levels, we gradually reduce the advancement threshold $\zeta$ over time, starting from the lowest difficulties. This encourages progression to more complex tasks if mastery is not achieved promptly. The dynamic advancement threshold for a given task and difficulty, at any training step and learning rate, is defined as:

\begin{equation}
    \zeta(\tau, \delta, \text{step}, \text{lr}) = \min \left(
        \frac{S}{\text{step}}, \frac{L}{\text{lr}^{\text{max}} - \text{lr}}
    \right) \cdot \zeta_0 \cdot \frac{\delta_\tau}{\delta_\tau^{\text{max}}}
\end{equation}

Here, $S$ and $L$ denote the training step and learning rate, respectively, at which the threshold for the highest difficulty begins to decrease. We set this point at $50\%$ of the total training duration to ensure the model has adequate exposure to all difficulty levels in the dataset.

\subsection{Multitask learning}
\label{app:multitask}

Our final goal is to train a model that can solve all numeracy tasks simultaneously, which requires simultaneously optimizing nine tasks with varying degrees of difficulty.
Training is formulated as a multi-task learning problem, aiming to optimize the model's performance $p(\mathcal{T})$ across all tasks $\tau \in \mathcal{T}$ according to the generalized mean:
\begin{equation}
    p(\mathcal{T}) = \left(\frac{1}{|\mathcal{T}|}\sum_{\tau\in \mathcal{T}}\left(p(\tau)+\epsilon\right)^\lambda\right)^{\frac{1}{\lambda}}
\end{equation}
The scaling factor $\lambda$ controls how strongly low performing tasks influence the overall score. We set $\lambda=-1$ for the harmonic mean.
Since the individual losses can have different magnitudes, we normalize all losses to 10 after a warmup period.
To account for the varying complexities of the tasks, we define a dynamic sampling ratio $r_\tau\in(0,1)$ with $\sum_\tau r_\tau=1$ for each mini-batch. 
Given the performance $p(\tau)\in[0,1]$ of each task $\tau$ and a given time step, we update the sampling ratio by
\begin{equation}
\label{eq:inv_gen_mean}
    r_\tau^\text{new} = \alpha \cdot r_\tau^\text{old} + (1 - \alpha) \cdot \frac{(1 - p(\tau))^{1-\lambda}}{\sum_\tau (1 - p(\tau))^{1-\lambda}}\, ,
\end{equation}
where $\alpha=0.5$ is a momentum parameter. Intuitively, $\lambda <0$ leads to more aggressive oversampling of low performing tasks. \Cref{tab:ablation_dyn_ratio} compares the effect of different values of $\lambda$.
This multi-task sampling strategy aims to achieve a balanced performance across all tasks without the need for expensive hyperparameter tuning for each setting. 
We note that this approach becomes less important for larger training budgets.

\newpage
\FloatBarrier
\section{BitTokens benchmark}
\subsection{Multi-task results}
\label{app:multitask_results}

\begin{table}[ht]\centering
\begin{tabular}{llccccc}\toprule
               & Metric & Subword & Single Digit & xVal & FoNE & BitToken (Ours)               \\
\cmidrule{1-7}
Min/Max         & $\uparrow$ Exact match acc &  0.991  &   0.991   & 0.001 &  0.859    & \textbf{0.999}       \\
Interval        & $\uparrow$ Exact match acc &  0.993  &   0.992      & 0.324 & 0.997     & \textbf{1.000}                \\
Sorting         & $\uparrow$ Exact match acc &  0.927 &  0.955    & 0.000 &   0.574   & \textbf{0.998} \\
\cmidrule{1-7}
Add        & $\uparrow$ $\mathrm{log\textnormal{-}sMAPE}$ &  0.954  &   0.974    &  0.092 & 0.809 & \textbf{0.989}  \\
Mult  & $\uparrow$ $\mathrm{log\textnormal{-}sMAPE}$ &  0.450  &    0.893  & 0.094 & 0.300 & \textbf{0.991} \\
Div & $\uparrow$ $\mathrm{log\textnormal{-}sMAPE}$ &  0.522  &  0.681    &  0.093 & 0.275 & \textbf{0.989} \\
\cmidrule{1-7}
Fineweb & $\downarrow$ Perplexity & 74.83 & 64.23  & 102.57 & 79.78 & \textbf{48.78} \\
\bottomrule
\end{tabular}
\caption{This table shows the exact multi-task performance values reported in \Cref{fig:results}.}
\label{tab:lm_performance_multi}
\end{table}

\begin{table}[ht]\centering
\begin{tabular}{llllll}\toprule
               & Subword & Single Digit & xVal & FoNE & BitToken (Ours) \\
\cmidrule{1-6}
Min/Max        & 34.1 + 8.5 & 60.5 + 16.0 & 11.1 + 2.0 & 11.1 + 2.0 &  11.1 + 2.0\\
Interval       & 45.1 + 2.0 & 27.1 + 2.0 & 15.9 + 2.0 & 15.9 + 2.0 &  15.9 + 2.0\\
Sorting        & 34.0 + 23.1 & 60.0 + 58.0 & 11.1 + 9.5 & 11.1 + 9.5 &  11.1 + 9.5\\
\cmidrule{1-6}
Add            & 21.6 + 9.6 & 32.4 + 16.6 & 5.9 + 2.4 & 5.9 + 2.4 &  5.9 + 2.0 \\
Mult           & 21.2 + 10.3 & 32.0 + 17.7 & 5.9 + 2.4 & 5.9 + 2.4 &  5.9 + 2.0 \\
Div           & 22.8 + 9.6 & 37.9 + 15.4 & 5.9 + 2.4 & 6.1 + 2.4 &  6.1 + 2.0 \\
\bottomrule
\end{tabular}
\caption{This table shows the mean number of tokens required (input + output) to solve one sample for each task.
A large number of output tokens substantially increases generation time, as each output token requires a separate forward pass.}
\label{tab:lm_token_count}
\end{table}

\begin{table}[ht]\centering
\begin{tabular}{llccccc}\toprule
               & Metric & Subword & Single Digit & xVal & FoNE & BitToken (Ours)               \\
\cmidrule{1-7}
Add        & $\uparrow$ Exact match acc & 0.926  &  0.962     & 0.002 & 0.767 &  \textbf{0.977}  \\
Mult  & $\uparrow$ Exact match acc &  0.290  & 0.784      &   0.030 & 0.164 & \textbf{0.978} \\
Div & $\uparrow$ Exact match acc &  0.379 &  0.514     &  0.013 & 0.136 & \textbf{0.982} \\
\bottomrule
\end{tabular}
\caption{Here we report the multi-task exact match accuracy for the single-step calculation tasks.}
\end{table}

\subsection{Solo-task results}
\label{app:solo_task}

\begin{table}[H]\centering
\begin{tabular}{llccccc}\toprule
               & Metric & Subword & Single Digit & xVal & FoNE & BitToken (Ours)               \\
\cmidrule{1-7}
Min/Max         & $\uparrow$ Exact match acc & 0.997 & \textbf{1.000} & 0.002 & 0.909 & 0.999 \\
Interval        & $\uparrow$ Exact match acc & 0.999 & \textbf{1.000} & 0.328 & \textbf{1.000} & \textbf{1.000} \\
Sorting         & $\uparrow$ Exact match acc & \textbf{0.999} & 0.998 & 0.000 & 0.718 & \textbf{0.999} \\
\cmidrule{1-7}
Add        & $\uparrow$ $\mathrm{log\textnormal{-}sMAPE}$ & 0.999 & \textbf{1.000} & 0.090 & 0.851 & 0.998\\
Mult  & $\uparrow$ $\mathrm{log\textnormal{-}sMAPE}$ & 0.564 & 0.964 & 0.096 & 0.328 & \textbf{0.985} \\
Div & $\uparrow$ $\mathrm{log\textnormal{-}sMAPE}$ & 0.564 & 0.773 & 0.094 & 0.236 & \textbf{0.995} \\
\cmidrule{1-7}
Exp & $\uparrow$ $\mathrm{log\textnormal{-}sMAPE}$ & 0.403 & \textbf{0.450} & 0.047 & 0.366 & 0.391 \\
Mean & $\uparrow$ $\mathrm{log\textnormal{-}sMAPE}$ & \textbf{0.976} & 0.965 & 0.060 & 0.444 & 0.534 \\
Std  & $\uparrow$ $\mathrm{log\textnormal{-}sMAPE}$ & 0.365 & \textbf{0.420} & 0.057 & 0.191 & 0.354 \\
\cmidrule{1-7}
Fineweb  & $\downarrow$ Perplexity & 32.56 & 30.39 & 29.65 & 28.67 & \textbf{28.33} \\
\bottomrule
\end{tabular}

\caption{This table shows the performance of each model trained on a single task with the same 10B token budged.}
\label{tab:lm_performance_single}
\end{table}

\newpage
\FloatBarrier

\subsection{Ablations}
\label{app:ablations}

We tested multiple token combination strategies other than our described ``Sum" strategy.
``Product'' multiplies the $[\mathtt{NUM}]$ token with the binary representation.
``Concat'' sets the $[\mathtt{NUM}]$ token to zero in the 64 binary representation dimensions before adding.
``Zero pad'' removes the $[\mathtt{NUM}]$ token and inputs the binary representation directly.
``Weighted'' passes the binary representation through a linear layer.
``Weighted + sum'' passes the binary representation through a linear layer and then adds it to the $[\mathtt{NUM}]$ token.
Across token combination strategies (\Cref{tab:ablation_combination}), performance is stable with the simple sum and zero pad strategies performing best. 

Changing the numeric base (\Cref{tab:ablation_base}) substantially degrades performance, particularly for multiplication and division, highlighting the importance of base choice. We found that switching to layer norm, or changing the number head as shown in \Cref{tab:ablation_num_head}, did not meaningfully change performance. Curriculum training (\Cref{tab:ablation_curriculum}) yields clear improvements over direct multitask training, especially on arithmetic tasks, confirming its necessity for stable convergence.
Finally, appending the reciprocal to the encoding improves performance with a negligible performance overhead (\Cref{tab:ablation_reciprocal}).

\begin{table*}[ht]\centering
    \footnotesize
\begin{tabular}{llcccccc}\toprule
               & Metric & Sum & Product & Concat & Zero Pad & Weighted & Weighted + Sum               \\
\cmidrule{1-8}
 Min/Max         &$\uparrow$ Exact match acc & 0.999 & 0.996 & 0.997 &  \textbf{1.000}    & 0.998 & 0.999\\
 Interval        &$\uparrow$ Exact match acc & \textbf{1.000} &  \textbf{1.000} &  \textbf{1.000} &  \textbf{1.000} & \textbf{1.000} &  \textbf{1.000}\\
 Sorting         &$\uparrow$ Exact match acc & 0.990 &  0.967 & 0.966 & \textbf{0.992} &  0.983 &  0.989 \\
\cmidrule{1-8}
Add        & $\uparrow$ $\mathrm{log\textnormal{-}sMAPE}$ & 0.989 & 0.980 & 0.977 & \textbf{0.990} & 0.985 & 0.984\\
Mult       & $\uparrow$ $\mathrm{log\textnormal{-}sMAPE}$& 0.991   & 0.828 & 0.871 & \textbf{0.992} & 0.988 & 0.986\\
Div      & $\uparrow$ $\mathrm{log\textnormal{-}sMAPE}$& \textbf{0.989}  & 0.849 &  0.888  & 0.981 & 0.947 & 0.923  \\
\cmidrule{1-8}
Fineweb  & $\downarrow$ Perplexity&  48.78 & 56.26 & 55.03 & \textbf{48.29} & 49.04 & 50.99 \\
\bottomrule
\end{tabular}
\caption{Ablation: Performance in the multi-task setting with different combination strategies of the $[\mathtt{NUM}]$ token and the binary number representation.}
\label{tab:ablation_combination}
\end{table*}

\begin{table*}[ht]\centering
\begin{tabular}{llccc}\toprule
               & Metric & Base 10 & Layer norm  \\
\cmidrule{1-4}
 Min/Max &$\uparrow$ Exact match acc     &  0.992\textsuperscript{\textcolor{red}{\scriptsize -0.007}} & 0.997\textsuperscript{\textcolor{red}{\scriptsize -0.003}} &  \\
 Interval &$\uparrow$ Exact match acc       &  0.997\textsuperscript{\textcolor{red}{\scriptsize -0.003}} & 1.000\textsuperscript{\textcolor{gray}{\scriptsize +0.000}} &  \\
 Sorting  &$\uparrow$ Exact match acc       & 0.930\textsuperscript{\textcolor{red}{\scriptsize -0.062}} & 0.976\textsuperscript{\textcolor{red}{\scriptsize -0.016}} & \\
\cmidrule{1-4}
 Add  & $\uparrow$ $\mathrm{log\textnormal{-}sMAPE}$    &   0.937\textsuperscript{\textcolor{red}{\scriptsize -0.053}} & 0.981\textsuperscript{\textcolor{red}{\scriptsize -0.009}} & \\
 Mult & $\uparrow$ $\mathrm{log\textnormal{-}sMAPE}$ &  0.526\textsuperscript{\textcolor{red}{\scriptsize -0.466}} & 0.988\textsuperscript{\textcolor{red}{\scriptsize -0.004}} & \\
 Div& $\uparrow$ $\mathrm{log\textnormal{-}sMAPE}$ &  0.437\textsuperscript{\textcolor{red}{\scriptsize -0.544}} & 0.987\textsuperscript{\textcolor{green}{\scriptsize +0.006}} &  \\
\cmidrule{1-4}
  Fineweb& $\downarrow$ Perplexity  & 76.54\textsuperscript{\textcolor{red}{\scriptsize +28.25}} & 50.10\textsuperscript{\textcolor{red}{\scriptsize +1.81}} &  \\
  Fineweb text only& $\downarrow$ Perplexity  & 82.99\textsuperscript{\textcolor{red}{\scriptsize +31.08}} & 53.94\textsuperscript{\textcolor{red}{\scriptsize +2.03}} &  \\
\bottomrule
\end{tabular}
\caption{Ablation: Performance in the multi-task setting with different base encoding and normalization. Deltas to baseline shown as red (degradation) and green (improvement).}
\label{tab:ablation_base}
\end{table*}

\begin{table*}[ht]\centering
\begin{tabular}{llc}\toprule
               & Metric & BitTokens w/o reciprocal  \\
\cmidrule{1-3}
 Div& $\uparrow$ $\mathrm{log\textnormal{-}sMAPE}$ &  0.351\textsuperscript{\textcolor{red}{\scriptsize -0.644}}  \\
\bottomrule
\end{tabular}
\caption{Ablation: Removing the reciprocal to the encoding vector severely reduces performance on the division task.}
\label{tab:ablation_reciprocal}
\end{table*}

\begin{table*}[ht]\centering
    \footnotesize
\begin{tabular}{llccccc}\toprule
 & Metric & Subword & Single Digit & xVal & FoNE & BitToken (Ours) \\
\cmidrule{1-7}
Min/Max   & $\uparrow$ Exact match acc &  0.990$^{\textcolor{red}{-0.001}}$ & 0.990$^{\textcolor{red}{-0.001}}$ & 0.002$^{\textcolor{green}{+0.001}}$ & 0.874$^{\textcolor{green}{+0.015}}$ & 0.991$^{\textcolor{red}{-0.008}}$ \\
Interval  & $\uparrow$ Exact match acc &  0.990$^{\textcolor{red}{-0.003}}$ & 0.991$^{\textcolor{red}{-0.001}}$ & 0.322$^{\textcolor{red}{-0.002}}$ & 0.997$^{\textcolor{gray}{0.000}}$ & 1.000$^{\textcolor{gray}{0.000}}$ \\
Sorting   & $\uparrow$ Exact match acc &  0.927$^{\textcolor{gray}{0.000}}$ & 0.949$^{\textcolor{red}{-0.006}}$ & 0.000$^{\textcolor{gray}{0.000}}$ & 0.622$^{\textcolor{green}{+0.048}}$ & 0.933$^{\textcolor{red}{-0.065}}$ \\
\cmidrule{1-7}
Add   & $\uparrow$ $\mathrm{log\textnormal{-}sMAPE}$ &  0.950$^{\textcolor{red}{-0.004}}$ & 0.967$^{\textcolor{red}{-0.007}}$ & 0.093$^{\textcolor{green}{+0.001}}$ & 0.813$^{\textcolor{green}{+0.004}}$ & 0.948$^{\textcolor{red}{-0.041}}$ \\
Mult  & $\uparrow$ $\mathrm{log\textnormal{-}sMAPE}$ &  0.339$^{\textcolor{red}{-0.111}}$ & 0.484$^{\textcolor{red}{-0.409}}$ & 0.094$^{\textcolor{gray}{0.000}}$ & 0.285$^{\textcolor{red}{-0.015}}$ & 0.385$^{\textcolor{red}{-0.606}}$ \\
Div   & $\uparrow$ $\mathrm{log\textnormal{-}sMAPE}$ &  0.433$^{\textcolor{red}{-0.089}}$ & 0.560$^{\textcolor{red}{-0.121}}$ & 0.093$^{\textcolor{gray}{0.000}}$ & 0.300$^{\textcolor{green}{+0.025}}$ & 0.376$^{\textcolor{red}{-0.613}}$ \\
\cmidrule{1-7}
Fineweb & $\downarrow$ Perplexity &  76.92$^{\textcolor{red}{+2.09}}$ & 72.89$^{\textcolor{red}{+8.66}}$ & 103.86$^{\textcolor{red}{+1.29}}$ & 75.99$^{\textcolor{green}{-3.79}}$ & 77.26$^{\textcolor{red}{+28.48}}$ \\
\bottomrule
\end{tabular}
\caption{Ablation: Performance in the multi-task setting without curriculum training. Deltas to baseline shown as red (degradation), green (improvement), and gray (no change).}
\label{tab:ablation_curriculum}
\end{table*}


\begin{table*}[ht]\centering
    \footnotesize
    
    \begin{tabular}{lcccc}\toprule
                   & Metric & None & Linear & MLP               \\
    \cmidrule{1-5}
     Min/Max         &$\uparrow$ Exact match acc & 0.999 & \textbf{1.000}  & 0.998 \\
     Interval        &$\uparrow$ Exact match acc & \textbf{1.000} & \textbf{ 1.000} & \textbf{1.000} \\
     Sorting         &$\uparrow$ Exact match acc & \textbf{0.994} & 0.992  & 0.989\\
    \cmidrule{1-5}
    Add        & $\uparrow$ $\mathrm{log\textnormal{-}sMAPE}$ & 0.985 & \textbf{0.990}  & 0.989 \\
    Mult       & $\uparrow$ $\mathrm{log\textnormal{-}sMAPE}$& 0.993 & 0.992 & \textbf{0.994}\\
    Div      & $\uparrow$ $\mathrm{log\textnormal{-}sMAPE}$& 0.993 & 0.981 & \textbf{0.995} \\
    \cmidrule{1-5}
    Fineweb  & $\downarrow$ Perplexity& 47.69 & 48.29 & \textbf{47.50}\\
    \bottomrule
    \end{tabular}
    
    \caption{Ablation: BitToken performance in the multi-task setting with different number output heads. The encoding is either taking directly from the final hidden layer, transformed by a single linearlayer, or transformed by a 2-layer MLP with ReLU activation.}
    
    \label{tab:ablation_num_head}
\end{table*}

\begin{table*}[ht]\centering
    \footnotesize
    
    \begin{tabular}{lcccc}\toprule
                   & Metric & harmonic & geometric & arithmetic \\
    \cmidrule{1-5}
     Min/Max         &$\uparrow$ Exact match acc & \textbf{1.000} & 1.000 & \textbf{1.000} \\
     Interval        &$\uparrow$ Exact match acc & \textbf{ 1.000} &\textbf{ 1.000} & \textbf{1.000}\\
     Sorting         &$\uparrow$ Exact match acc &  0.992 & 0.995 & \textbf{0.997} \\
    \cmidrule{1-5}
    Add        & $\uparrow$ $\mathrm{log\textnormal{-}sMAPE}$  & \textbf{0.991} & 0.983 & 0.710 \\
    Mult       & $\uparrow$ $\mathrm{log\textnormal{-}sMAPE}$& \textbf{0.995} & 0.990 & 0.979\\
    Div      & $\uparrow$ $\mathrm{log\textnormal{-}sMAPE}$& 0.981  & \textbf{0.995} & 0.978 \\
    \cmidrule{1-5}
    Fineweb  & $\downarrow$ Perplexity& \textbf{48.29} & 53.13 & 77.28 \\
    \bottomrule
    \end{tabular}
    
    \caption{Ablation: BitToken performance in the multi-task setting with different task sampling strategies. We compare aggressive oversampling for low performing tasks using the harmonic mean ($\lambda=-1$), moderate oversampling using the geometric mean ($\lambda=0$), and performance independent sampling using the arithmetic mean ($\lambda=1$).}
    
    \label{tab:ablation_dyn_ratio}
\end{table*}

\begin{table*}[ht]\centering
  \footnotesize
  \begin{tabular}{lcccccccccc}\toprule
    & Metric & \multicolumn{4}{c}{Single Digit with $r_\tau^\text{max}=$} & & \multicolumn{4}{c}{BitToken with $r_\tau^\text{max}=$} \\
    & & 0.1 & 0.2 & 0.3 & 1 & & 0.1 & 0.2 & 0.3 & 1 \\
  \cmidrule{1-11}
   Min/Max     &$\uparrow$ Exact match acc & \textbf{0.988} & 0.985 & \textbf{0.988} & \textbf{0.988} & & \textbf{0.999} & 0.996 & 0.989 & 0.990 \\
   Interval  &$\uparrow$ Exact match acc & \textbf{0.991} & 0.989 & 0.990 & \textbf{0.991} & & \textbf{1.000} &\textbf{ 1.000} & 0.999 & \textbf{1.000}\\
   Sorting &$\uparrow$ Exact match acc & \textbf{0.927} & 0.897 & 0.909 & 0.911 & &\textbf{0.980} & 0.963 & 0.920 & 0.922 \\
  \cmidrule{1-11}
  Add  & $\uparrow$ $\mathrm{log\textnormal{-}sMAPE}$  & \textbf{0.965} & 0.954 & 0.947 & 0.957 && \textbf{0.965}& 0.944 & 0.905 & 0.922 \\
  Mult & $\uparrow$ $\mathrm{log\textnormal{-}sMAPE}$& \textbf{0.797} & 0.654 & 0.591 & 0.625 && \textbf{0.985}& 0.971 & 0.895 & 0.932\\
  Div & $\uparrow$ $\mathrm{log\textnormal{-}sMAPE}$& \textbf{0.550} & 0.453 & 0.430 & 0.439 && \textbf{0.983}& 0.966 & 0.896 & 0.931\\
  \cmidrule{1-11}
  Exp  & $\uparrow$ $\mathrm{log\textnormal{-}sMAPE}$ & 0.253 & 0.292 & 0.311 & \textbf{0.313} && 0.312& 0.355 & 0.363 & \textbf{0.374} \\
  Mean & $\uparrow$ $\mathrm{log\textnormal{-}sMAPE}$& 0.779 & 0.789 & 0.817 & \textbf{0.819} && 0.381& 0.485 & 0.486 & \textbf{0.489} \\
  Std  & $\uparrow$ $\mathrm{log\textnormal{-}sMAPE}$& 0.214 & 0.237 & 0.357 & \textbf{0.358} && 0.214& 0.306 & 0.304 & \textbf{0.324} \\
  \cmidrule{1-11}
  Fineweb  & $\downarrow$ Perplexity& \textbf{72.29} & 80.50 & 88.37 & 90.37 & &\textbf{55.22}& 69.22 & 81.44 & 82.26 \\
  \bottomrule
  \end{tabular}
  \caption{Ablation: Single digit and BitToken performance in the multi-task setting including difficult tasks with capped maximum task ratio $r_\tau^\text{max}$. Best values per model are marked in bold.}
  \label{tab:ablation_capped_ratio}
\end{table*}

\newpage
\FloatBarrier

\subsection{Failure mode analysis of the Mean task}

\begin{figure}[t]
    \centering
    \begin{subfigure}[t]{0.45\linewidth}
        \centering
        \includegraphics[width=\linewidth]{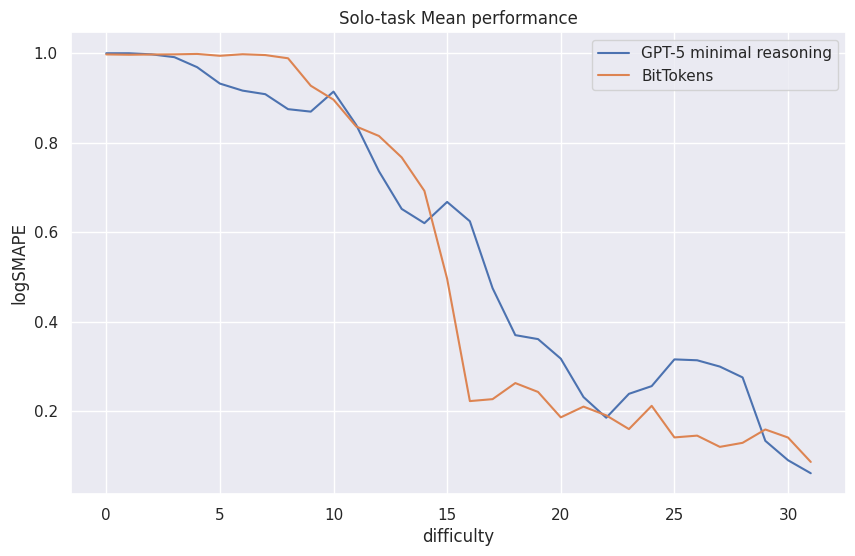}
        \caption{The Mean task performance of BitTokens and GPT-5 with minimal reasoning show a similar performance curve. The chosen difficulty metric directly correlates with the models' performances.}
        \label{fig:mean_diff}
    \end{subfigure}\hspace{3mm}
    \begin{subfigure}[t]{0.45\linewidth}
        \centering
        \includegraphics[width=\linewidth]{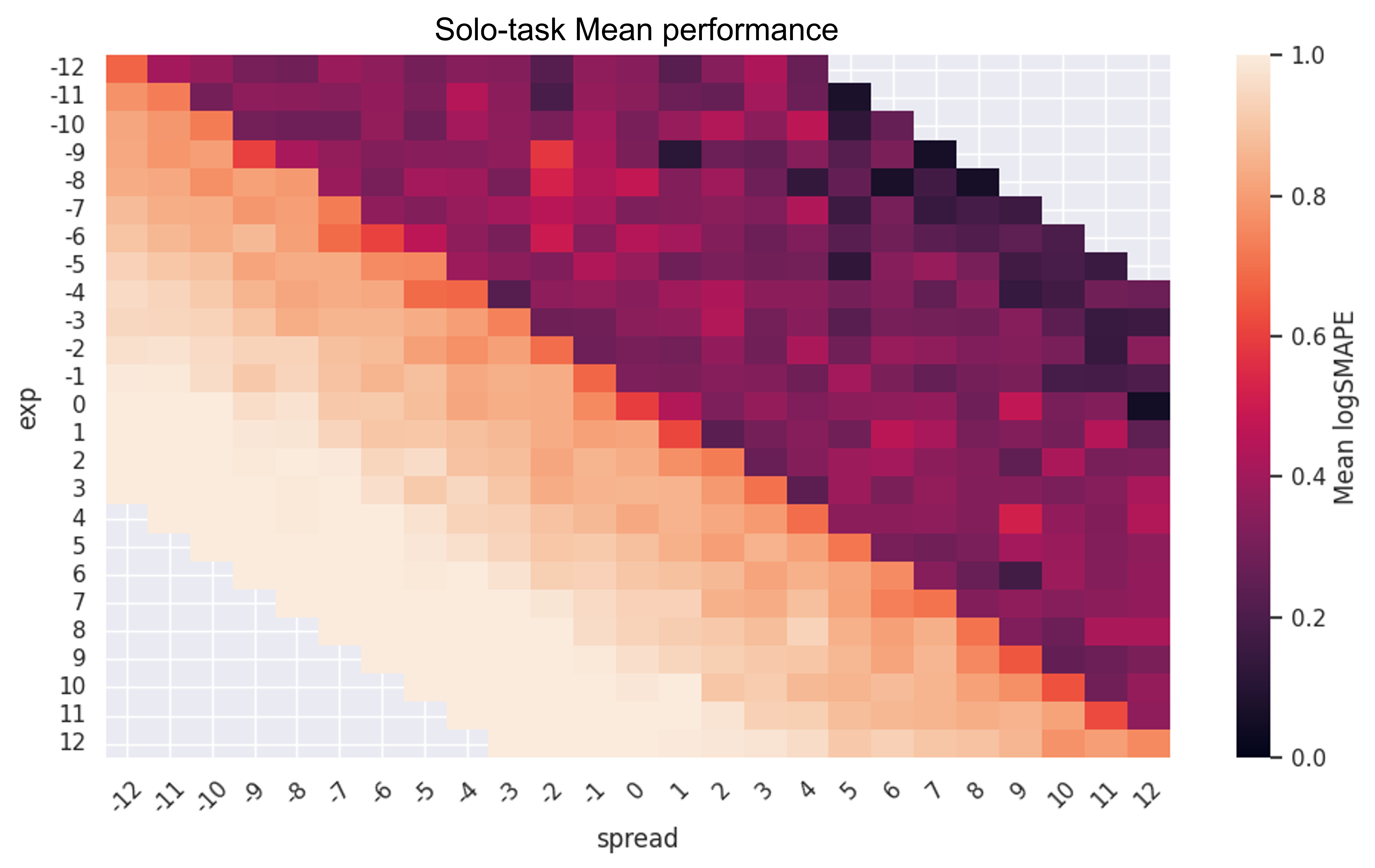}
        \caption{The performance drops sharply once the spread of the list is larger than the magnitude of the list's mean since there exists no common substring anymore. }
        \label{fig:mean_acc}
    \end{subfigure}
    \hfill
    \caption{The performance of BitTokens for the Mean task in the solo task setting, with the difficulty of a sample defined as $\delta_\mathrm{Mean}=\text{spread}-\text{exp}+15$.}
    \label{fig:arithmetic_dist}
\end{figure}

\begin{figure}[t]
    \centering
    \includegraphics[width=0.85\linewidth]{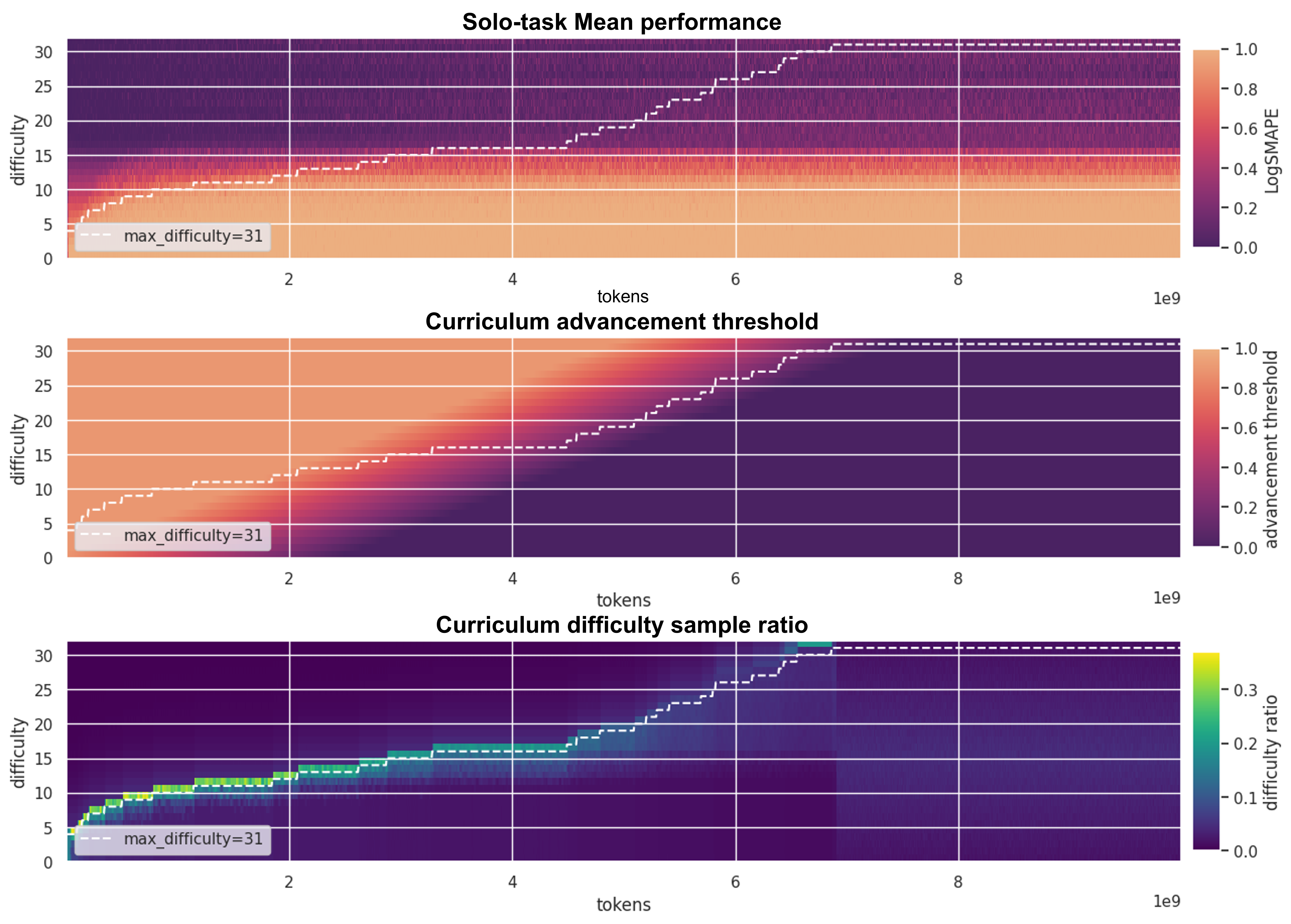}
    \caption{Visualization of the BitTokens curriculum training for the solo-task Mean setting. The difficulty metric does align with the model's performance, but progress plateaus once the model's single forward pass capacity is reached.}
    \label{fig:curriculum_mean}
\end{figure}

As discussed in \Cref{sec:experiments}, BitTokens struggle to solve multi-step tasks within a single forward pass in our experimental setup. We conducted further analysis to better characterize the limits of the tested model and identify promising directions for further work. \Cref{fig:mean_diff} compares BitToken performance with the GPT-5 minimal reasoning benchmark model as a function of our difficulty metric. Both models exhibit a similar trend, suggesting that even frontier models struggle with the same class of difficult problems as BitTokens.

To understand the source of this difficulty, we plotted model performance as a function of exponent spread and average exponent in \Cref{fig:mean_acc}.
The resulting heatmap shows that performance deteriorates rapidly once the exponent spread exceeds the average exponent. Intuitively, this transition corresponds to whether the numbers in the input sequence share a long common prefix. For example, consider the following two lists:
\[
\begin{array}{cc}
\textbf{Common prefix} & \textbf{No common prefix} \\[0.5em]

[ \mathbf{5231.}41, \mathbf{5231.}67, \mathbf{5231.}92]
&
[ 0.0041, 52.31, 8417.9 ]
\end{array}
\]
In the first case, the numbers share the prefix ``5231'', meaning that only the remaining digits contribute significantly to the computation of the mean. The task therefore reduces to averaging a small residual component. In the second case, the exponents differ substantially, so there is no shared prefix structure that can simplify the computation. The model must instead consider all digits to compute the result accurately.

\Cref{fig:curriculum_mean} illustrates the training dynamics of the curriculum for the mean task. The model first learns to compute results for inputs with long common substrings and gradually improves on increasingly difficult samples, up to difficulty level 13. Beyond this point, performance plateaus, suggesting that the model has reached the limit of what it can reliably compute within a single forward pass. As training continues, the curriculum scheduler lowers the advancement threshold, encouraging the model to continue training on more difficult samples.

Although BitTokens appear to be constrained primarily by the model’s capacity, the single-digit and subword tokenization schemes perform slightly better on the Std and Exponentiation tasks, and are able to nearly solve the Mean task.
We hypothesize that this performance gap arises from differences in available computational steps. Traditional multi-token number tokenization models effectively provide $L \times T$ total layers, where $L$ is the number of layers and $T$ is the number of generated digits, enabling iterative refinement of the numerical output. In contrast, the BitToken head must resolve the entire arithmetic operation within the fixed depth $L$ of a single pass.

In the multi-token setting, previously generated digits can serve as an external scratchpad for storing intermediate state variables such as partial sums or carry information. For instance, the model may first estimate coarse partial sums in the initial forward passes and use subsequent tokens to recursively refine the residual
\begin{equation}
    S_{total} - \sum_{k=1}^{T-1} d_k \cdot 10^{-k}.
\end{equation}
This permits a hierarchical decomposition of the computation that is unavailable to the BitToken model, which must instead compress the full high precision result into latent representation in a single step. Consequently, BitTokens encounter a bottleneck once the complexity of the arithmetic operation exceeds the effective parallel circuit depth of the model.

Future work should investigate whether intermediate auxiliary steps improve performance of BitTokens for complex arithmetic tasks. Once the model's maximum single-forward pass capacity has been defined, larger computations could potentially be decomposed into optimally sized chunks to maximize efficiency.

\end{document}